\def\eqref#1{equation~\ref{#1}}
\def\1{\bm{1}}
\def\ro{{\textnormal{o}}}
\def\vg{{\bm{g}}}
\def\vx{{\bm{x}}}
\def\mJ{{\bm{J}}}
\def\mK{{\bm{K}}}
\def\mP{{\bm{P}}}
\def\mQ{{\bm{Q}}}
\def\mV{{\bm{V}}}
\def\mW{{\bm{W}}}
\DeclareMathAlphabet{\mathsfit}{\encodingdefault}{\sfdefault}{m}{sl}
\SetMathAlphabet{\mathsfit}{bold}{\encodingdefault}{\sfdefault}{bx}{n}
\newcommand{\R}{\mathbb{R}}
\DeclareMathOperator{\Tr}{Tr}
\newcommand{\norm}[1]{\left\| #1\right\|}
\crefname{appsec}{Appendix}{Appendices}
\Crefname{appsec}{Appendix}{Appendices}
\theoremstyle{plain}
\newtheorem{theorem}{Theorem}[section]
\newtheorem{proposition}[theorem]{Proposition}
\theoremstyle{definition}
\theoremstyle{remark}
\definecolor{token}{HTML}{3DA954}
\definecolor{nrgpt}{HTML}{8834EB}
\definecolor{attn}{HTML}{4D4DE5}
\definecolor{ff}{HTML}{7575BD}
\newif\ifrebuttalmode
\newcommand{\edits}[1]{
  \ifrebuttalmode
    {\color{blue}{#1}}
  \else
    #1
  \fi
}
\newenvironment{edits*}{%
  \ifrebuttalmode%
    \begingroup\color{blue}%
  \else%
    \begingroup%
  \fi%
}{%
  \endgroup%
}
\title{NRGPT: An Energy-based Alternative for GPT}
\author{%
  \begin{tabular*}{0.9\textwidth}[t]{@{}l@{\extracolsep{\fill}}l@{\extracolsep{\fill}}l@{}}
    \textbf{Nima Dehmamy}\thanks{Equal contribution.} & \textbf{Benjamin Hoover}$^*$ & \textbf{Bishwajit Saha}$^*$ \\
    \mdseries IBM Research & \mdseries IBM Research; Georgia Tech & \mdseries IBM Research \\[1em]
    \textbf{Leo Kozachkov} & \textbf{Jean-Jacques Slotine} & \textbf{Dmitry Krotov}$^*$ \\
    \mdseries Brown University & \mdseries MIT & \mdseries IBM Research
  \end{tabular*}
}
\begin{document}

\maketitle

\begin{abstract}
    Generative Pre-trained Transformer (GPT) architectures are the most popular design for language modeling. 
    Energy-based modeling is a different paradigm that views inference as a dynamical process operating on an energy landscape. 
    We propose a minimal modification of the GPT setting to unify it with the EBM framework. 
    The inference step of our model, which we call eNeRgy-GPT (NRGPT), is conceptualized as an exploration of the tokens on the energy landscape. 
    We prove, and verify empirically, that under certain circumstances this exploration becomes gradient descent, although they don't necessarily lead to the best performing models. 
    We demonstrate that our model performs well for simple language (Shakespeare dataset), algebraic ListOPS tasks, and richer settings such as OpenWebText language modeling. 
    We also observe that our models may be more resistant to overfitting, doing so only during very long training. %
\end{abstract}

Transformers represent a dominant paradigm in autoregressive language modeling \citep{vaswani2017attention}. 
In a typical setting, a sequence of tokens describing a text is passed through several transformer layers and mapped onto a new sequence, which is a copy of the original one shifted by one token and appended by the token that follows the initial sequence. 
At training time, this network is trained through self-supervised training, and at inference time the network is used for next token prediction. 
This is the standard Generative Pre-trained Transformer (GPT) setting, which is the first step in Large Language Model (LLM) design \citep{radford2018improving}.   

Energy-based modeling \citep{lecun2006tutorial} is another prominent paradigm in modern AI landscape that historically goes back to Hopfield Networks \citep{hopfield1982neural}. 
In this framework the operation of the neural  network is defined by a scalar energy function. 
Proper samples generated by the model (those that resemble training data) correspond to low energy states, while unrealistic samples (with large deviations from the training data distribution) correspond to high energy states. \edits{Thus EBMs are particularly appealing for their theoretical properties, because they view the ``forward pass'' through a deep network as an explicit optimization problem that maximizes the likelihood of the input data. Practically, this approach unlocks better tooling for systematic exploration of the solution space and enables natural solutions for both variable computation and model alignment via regularizers (see our extended discussion on EBMs for LLMs in \cref{ap:advantages-ebms}).}

Although at face value these two approaches look very different, in recent years a growing number of studies hint at deep connections. 
\citet{von2023transformers} showed evidence that in-context learning (ICL) may be gradient descent by constructed an explicit weights such that the forward pass was GD on MSE loss.
\citet{ahn2024transformers} further showed that transformers learn a preconditioned GD for ICL. 
However, both of these works make significant simplifications, such as considering only \textit{linear} transformers, omitting the softmax. 

Other works have have attempted to reconcile transformers and EBM from several angles. 
For instance, the Energy Transformer \citep{hoover2023energy} is an architecture, which is simultaneously a transformer and an energy-based model. 
In the image domain, the typical setting would be to reconstruct a set of masked tokens (patches) given the set of open tokens. 
The network solves this task by performing a gradient descent of the energy on the space of tokens at inference time. 
This architecture is inspired by associative memory models \citep{krotov2016dense} and for this reason solves the following problem: given a partially incomplete pattern -- complete it in a meaningful way. 
This aspect of the core design makes it difficult to apply Energy Transformers to GPT settings, in which the sequence needs to be transformed to a shifted sequence by means of going through the network. 
Intuitively, in Energy Transformers the masked tokens need to evolve rapidly to match the missing parts of the pattern (e.g., image or graph), while the open tokens need to stay almost constant to barely adjust for the smooth transitions between the masked and the open tokens within the pattern. 
This is in drastic contrast with the GPT setting, in which there are no masked tokens at all. Rather, every token needs to evolve into the following token in the sequence. 

A different line of work is inspired by ``System 2'' thinking and attempts to design an energy-based network for processing language \citep{gladstone2025energy}. 
In this study, transformers are used as an architectural motif that casts text into a scalar energy function. 
While models of this nature have benefits for language processing, they belong exclusively to the class of energy-based models, and are unrelated to the GPT settings, commonly used in most LLMs.

Individual modules within the transformer block, such as attention, have also been studied from the perspective of inference time optimization \citep{geshkovski2023mathematical,geshkovski2024emergence}. 
In this line of work, peculiar clustering properties of tokens have been observed. 
Energy-based optimization has also been studied in \citep{yang2022transformers} from the perspective of majorization-minimization algorithms. 

Despite this growing list of studies dedicated to synergies between autoregressive transformers and energy-based models, at present it remains unknown how to cast the commonly used GPT setting into a well-defined energy-based framework. 
Our work tackles this gap. We refer to our model as e\textbf{N}e\textbf{R}gy \textbf{G}enerative \textbf{P}re-trained \textbf{T}ransformer or NRGPT. 
The input sequence of tokens is mapped onto a shifted sequence of tokens, which includes the next word, see \autoref{fig:fig1}. The mapping is performed by a neural network, which recurrently applies the NRGPT block to the sequence of tokens. Each application of the block uses gradients of the network energy functions to update the state of the tokens. Each token has its own energy landscape, which is dependent on the states of other tokens. Specifically, our contributions are: 
\begin{itemize}
    \item We design an energy function and an update rule that describes the GPT setting with several possible variants including learnable inference rate and normalization operations: LayerNorm and RMSNorm.  
    \item We obtain excellent results on nested ListOPS tasks, including arithmetic operations, min/max selection, etc. 
    \item We show the feasibility of using NRGPT for language modeling on Shakespeare and OpenWebText datasets. 
    \item We do a systematic comparison of performance scaling of recurrent transformers and NRGPT. 
    \item We study empirically the properties of dynamical trajectories of tokens on the energy landscapes of our models. 
\end{itemize}

\section{Energy-based modeling}

\begin{figure}
    \centering
    \includegraphics[width=\linewidth]{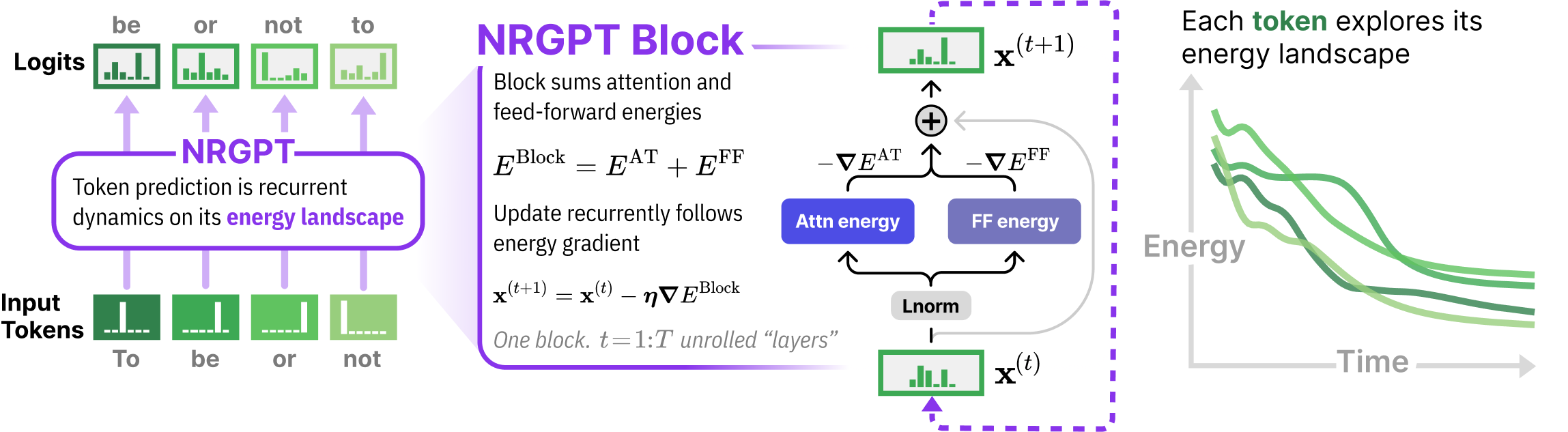}
    \caption{\textcolor{nrgpt}{\textbf{NRGPT}} casts the standard GPT setting into an energy-based framework. The network is defined as the sum of two energies: an \textcolor{attn}{\textbf{attention}} energy and a \textcolor{ff}{\textbf{feedforward}} energy. Each \textcolor{token}{\textbf{token}} is transformed into the next token by exploring the energy landscape. Recurrent application of the NRGPT block produces a dynamical system where each token can be thought of as a particle moving on the network's energy landscape.}
    \label{fig:fig1}
\end{figure}

In generative modeling our goal is to generate samples with a distribution close to observed datapoints.
If we manage to learn an approximate likelihood function for the dataset, we can generate data by sampling.  
This is also the premise Energy-Based Models (EBM). 
An example of EBM would be Dense Associative Memory \citep{krotov2016dense}, where datapoints are stored in minima of an energy function. 
But more generally, the energy can represent a negative-log-likelihood, $E(\mathbf{x}) = -\log P(\mathbf{x})$. 
In this case, the global minima of the energy represent maximum likelihood solutions. 
The deeper the energy, the higher the likelihood of that datapoint.
One strategy to train an EBM is to first learn the energy function by fitting the distribution of the data. 
The sampling process would then be separate from learning the energy. 

However, in high dimensional data, learning the distributions is notoriously difficult due to the curse of dimensionality. %
Diffusion models %
solve this problem by starting from high noise and cooling down.
Diffusion models do not learn an explicit energy function, only its gradients, the score function. %
Yet, having an explicit energy function could enable us to explore the solution space in ways not easily afforded by the implicit score function of diffusion models. 
So can we build a model which learns the energy directly? 

Similar to diffusion models, we use an end-to-end process, where learning the energy and generating datapoints are all done in one pass. 
The key idea is to have a differentiable sampling process which allows us to learn the parameters of the energy during sampling. 
Since real datapoints should have low energies, we choose a gradient-based sampling process. 
Note that we do not need to descend all the way to a minimum (i.e. maximum likelihood solution), since we want diverse samples. 
Instead, we do a fixed number of energy GD steps and demand that the final point matches real datapoints.
\begin{align}
    \mbox{Generated data: } \mathbf{x}^{(T)}, & & \mathbf{x}^{(t+1)} = \mathbf{x}^{(t)} - \boldsymbol{\eta}^{(t)} \boldsymbol{\nabla} E(x^{(t)})
\end{align}
for a fixed number of steps $T$, where $\mathbf{x}^{(0)}$ is some random initial point. 
Here $\boldsymbol{\eta}^{(t)}$ is a matrix that may depend on $\mathbf{x}$. This matrix has many different names, e.g., kinetic rates in physics, preconditioner in optimization, etc. We will call this matrix the {\it inference rate}, since it determines the size of the steps that the inference dynamics takes on the energy landscape. 
But how do we judge whether the output matches a real datapoint? 
One way would be to have a judge, like the discriminator in a GAN. 
Another setting where judging the output is more natural is autoregressive language modeling where the new datapoints are the next tokens and can be matched to the training text. 
In this case $\mathbf{x}\in \R^{N\times D}$ represents a real data sequence of length $N$ embedded in $D$ dimensions.
In causal language modeling the energy should take $\mathbf{x}_{<N}= (\mathbf{x}_1\dots \mathbf{x}_{N-1})$ as input and predict $\mathbf{x}_{N}$, as in 
\begin{align}
    \mathbf{x}^{(t+1)}_{N} = \mathbf{x}^{(t)}_N - \boldsymbol{\eta} \boldsymbol{\nabla} E(\mathbf{x}^{(t)}_N | \mathbf{x}^{(t)}_{<N})
\end{align}
Following the observations of the Energy Transformer (ET), we will show that one can choose a parametrization for $E$ such that the process of $T$-step GD closely resembles the forward-pass through a $T$ layer GPT transformer with a weight-sharing pattern. 

\section{NRGPT Module}
\begin{equation}
   E_A = -\frac{1}{\beta}\sum\limits_{h=1}^H \log\Bigg(\sum\limits_{B<A} \exp\Big(\beta\ \mathbf{g}_B^T \mJ^h \mathbf{g}_A \Big) \Bigg) - \sum\limits_{B=1}^N \mathbf{g}_B^T \mathbf{W_2} \sigma\Big( \mathbf{W_1}  \mathbf{g}_{B} \Big)
   \label{eq:E-NRGPT}
\end{equation}

In this section we will start from the structure of the transformer model and derive the energy function whose gradients yield a layer which is very close in structure to a transformer layer. 
Let $\mathbf{x}\in \R^{D\times N}$ be an input sequence of length $N$ embedded in $D$ dimensions. 
We will denote its components by $\mathbf{x}_{A i}$ with $A= 1\dots N$ and $i = 1\dots D$, or $\mathbf{x}_A$ suppressing the embedding index, but keeping the token index.  
Let $\mathbf{x}^{(t)}$ be the output sequence of layer $t$ of the model with $\mathbf{x}^{(0)}=\mathbf{x}$. 
A conventional transformer layer has an Attention layer (AT) followed by a two-layer feedforward (FF) and LayerNorm (LN) in series
\begin{align}
    \mathbf{x}^{(t+1)} = \mathbf{x}^{(t)} + \text{FF}\Big[\text{LN}\Big(\mathbf{x}^{(t)}+ \text{AT}\big(\text{LN}(\mathbf{x}^{(t)})\big)\Big)\Big]
\end{align}
But subsequent works such as GPT-J \citep{wang2021gpt}, PaLM \citep{chowdhery2023palm}, and Energy Transformer \citep{hoover2023energy} showed that the following parallel design has good performance too  
\begin{equation}
\mbox{\textbf{Parallel Transformer:}}\quad
    \mathbf{x}^{(t+1)} = \mathbf{x}^{(t)} +  \text{AT}(\mathbf{g}^{(t)}) 
    + \text{FF}(\mathbf{g}^{(t)}),\quad \mathbf{g}^{(t)} = \text{LN}(\mathbf{x}^{(t)})
    \label{eq:parallel-transformer}
\end{equation}
We choose this parallel transformer design as it is more suitable for our goal of replacing the transformer layer with the gradient of an energy. 

If passing through a layer becomes one step of energy decent (ED), then all layers need to share weights. Therefore, our model will consist of a single module replacing the transformer block. Instead of different layers, we will be recurrently feeding the output of the layer back into itself, so that $\mathbf{x}^{(t)}$ will become step $t$ of the ED instead of the layer number.  

\paragraph{Update Rule}
An important point to note is that the update rule of NRGPT is slightly different from conventional gradient descent and is of the form
\begin{align}
    \dot{\mathbf{x}} = \mathbf{x}^{(t+1)}- \mathbf{x}^{(t)} &= - \boldsymbol{\eta}^{(t)} {\partial E \over \partial \mathbf{g}^{(t)}} 
    \label{eq:update-rule}
\end{align}
where $\boldsymbol{\eta}^{(t)} \in \R^{D\times D}$ is an inference rate matrix, which can be learnable. Nevertheless, this can be a valid descent on $E$ as we can show that $E^{(t+1)}- E^{(t)} < 0$ under certain conditions, which depend on the normalization operation $\mathbf{g}$. We will derive these conditions for LayerNorm as well as RMSNorm, as well as when $\mathbf{g}=\mathbf{x}$, i.e., no normalization in \cref{sec:eta-precond}. Next, we introduce the energy of NRGPT module.

\subsection{Energy of NRGPT}
Matching our update rule (\ref{eq:update-rule}) to the parallel transformer (\ref{eq:parallel-transformer}), we define two terms in the energy, $E^\mathrm{AT}$ and $E^\mathrm{FF}$
\begin{align}
    E &= E^\mathrm{AT}+E^\mathrm{FF}, & \boldsymbol{\eta}\ro_g E^\mathrm{AT} &= - \mathrm{AT}(\boldsymbol{g}), & \boldsymbol{\eta} \ro_g E^\mathrm{FF} &= - \mathrm{FF}(\boldsymbol{g}), 
    \label{eq:E^AT+E^FF}
\end{align}
We begin by introducing the attention layer and deriving the energy function for the self-attention mechanism.
Then, we derive the energy function for the FF. Finally, we combine the two energy functions to obtain the total energy function for the transformer layer.

\paragraph{Attention.}
Consider a multi-head attention module with $H$ heads, and hidden dimension $Y=D/H$, index $h$ enumerates heads and runs $h= 1\dots H$. Its query, key, value and projection weights are 
\begin{align}
    \mW^Q, \mW^K, \mW^V, \mW^P 
    &\in \R^{H\times Y \times D}, 
\end{align}
Using the standard 
$\mK = \mW^K \mathbf{g},
    \mQ = \mW^Q\mathbf{g},
    \mV = \mW^V\mathbf{g} $, 
The MHA output for token $A$ is\footnote{
Usually the projection weights $\mW^P$ are defined as $D\times D$ and the head outputs are concatenated, into an $N\times (YH)  = N\times D$ matrix before multiplying by $W^P$. 
This is equivalent to our definition. 
}
\begin{align}
    \mathrm{AT}(\mathbf{g})_A &= \sum_{h=1}^H \big[\mW^P_h\big]^T \mV_h \mathrm{SM}\pa{\mK_h^T\mQ_{Ah}} 
\end{align}
denoting $\mJ =  \big[\mW^K\big]^T \mW^Q$, the softmax is defined as (we omit the self-interaction term $C=A$)
\begin{align}
    \mathrm{SM}(\mK^T \mQ)_{BA} = {\exp\pa{\beta \boldsymbol{g}_B^T\mJ \boldsymbol{g}_A}\over \sum_{C<A} \exp\pa{\beta \boldsymbol{g}_C^T\mJ \boldsymbol{g}_A}}, \quad \beta = {1\over \sqrt{Y} }
    \label{eq:softmax}
\end{align}

Following \cite{hoover2023energy}, define the attention energy 
\begin{align}
    E^\mathrm{AT}_A(\boldsymbol{g}) &= -\frac{1}{\beta}\sum_h \alpha_h \log\Big[\sum_{B<A} \exp\big(\beta \boldsymbol{g}_B^T\mJ_h \boldsymbol{g}_A
    \big) \Big]  
    \label{eq:E-AT}
\end{align}
where $\boldsymbol{\alpha} \in \R^{H}$ is a learnable weight.

Taking the gradient of $E^\mathrm{AT}$ w.r.t. $\boldsymbol{g}_A$ and using (\ref{eq:E^AT+E^FF}), the resulting attention layer becomes 
\begin{align}
   \mathrm{AT}(\boldsymbol{g})_A= -\boldsymbol{\eta} {\partial E^\mathrm{AT}_A(\boldsymbol{g})\over \partial \boldsymbol{g}_A} &= \sum\limits_{h=1}^H \alpha_h \boldsymbol{\eta} \mJ_h^T \boldsymbol{g} \mathrm{SM}\pa{\boldsymbol{g}^T\mJ_h \boldsymbol{g}_A} .
    \label{eq:dE-AT-dg}
\end{align}

\edits{
Both standard attention and this update have the form $\mathrm{AT}(\mathbf{g}) = \boldsymbol{M} \mathbf{g}\mathrm{SM}(\mathbf{g}^T \boldsymbol{J} \mathbf{g})$, but in the standard attention $\boldsymbol{M} = [\boldsymbol{W}^{P}]^T \boldsymbol{W}^V$ and in NRGPT attention is $\boldsymbol{M} = \alpha \boldsymbol{\eta} \boldsymbol{J}^T$. That is,

\begin{align}
    \mbox{Original: }\big[\mW^P_h\big]^T \mW^V_h \equiv \mbox{Energy: } \alpha_h \boldsymbol{\eta}\mJ_h^T 
\end{align}
}
In principle $\mW^V$ and $\mW^P$ can be merged into one matrix. 
\citep{he2024simplifying} also experimented with removing $\mW^V$ and $\mW^P$ and found that in the setting without skip connections, these two weights could be largely omitted. %

\paragraph{Feed-Forward network.}
The FF network  generally has two layers $\mathrm{FF}(\boldsymbol{g}_A) = {\mW^2}^T \sigma\pa{\mW^1 \boldsymbol{g}_A}$ with weights  
$\mW^1, \mW^2 \in \R^{M\times D}$, with $M$ being the size of the hidden layer. A possible choice for this network is a Dense Associative Memory \citep{krotov2016dense}. In this case 
\begin{align}
    E^{\text{FF}} &= - \sum\limits_{A=1}^N \boldsymbol{1}^T F\pa{\mW^1 \boldsymbol{g}_A}, \quad \text{s.t. } F' = \sigma \cr 
    \mathrm{FF}(\boldsymbol{g}_A) &= -\eta {\partial E^{\text{FF}}\over \partial \boldsymbol{g}_A}= \boldsymbol{\eta} {\mW^1}^T \sigma\pa{\mW^1 \boldsymbol{g}_A}  
    \label{eq:E-FF}
\end{align}

where $\boldsymbol{1}$ is an $M$-dimensional vector of ones. \edits{Both the standard MLP in transformers and the $\mathrm{FF}$ update in \cref{eq:E-FF} have the form $\boldsymbol{M} \sigma(\boldsymbol{W}^1 g)$. 
In the standard MLP, $\boldsymbol{M} = \boldsymbol{W}^2$, whereas in NRGPT we get $\boldsymbol{M} = \boldsymbol{W}^1 \boldsymbol{\eta}^\top$. That is,

\begin{align}
    \mbox{Original: }\mW^2 \equiv \mbox{Energy: }  \mW^1 \boldsymbol{\eta}^T.
\end{align}
}

As an example, in order for $E^\mathrm{FF}$ to reproduce the FF of transformers with $\sigma(z) = \mathrm{ReLU}(z) = \max(z,0)$, the function $F$ should be 
\begin{align}
    F(z) = {1\over 2} \sigma(z)^2 
\end{align}
Of course, the FF module can be replaced by other, more general, MLP networks. Essentially, any scalar function, which is additive in token index, can serve as a valid form of FF network. 
In the experiments (\cref{sec:experiment}) we will detail our choices of $E^\mathrm{FF}$. 

\edits{
    \paragraph{Distinction from prior work} NRGPT's energy is distinct from prior works like that of Energy Based Transformers (EBT)~\citep{gladstone2025energy} and Energy Transformer (ET)~\citep{hoover2023energy}. Specifically, EBT computes energies of next tokens an output of a standard transformer forward pass,
    while NRGPT describes a parameterized energy whose \emph{gradient} returns a transformer block. Repeatedly minimizing the energy by following this gradient resembles a complete transformer forward pass. On the other hand, ET's architecture appears more similar to NRGPT's design, but it's design was unsuited for \emph{causal} token generation. We expand on these differences further in \cref{ap:distinction}.
}

\subsection{Normalization of tokens \label{sec:eta-precond}}
These normalizations have the form 
\begin{align}
    \boldsymbol{g} & = \boldsymbol{\gamma} \odot \frac{\boldsymbol{x}-\bm{\mu}}{\sqrt{{1\over D} \|\boldsymbol{x}-\bm{\mu}\|^2+\epsilon}} +\boldsymbol{\delta}  
\end{align}
with $\bm{\mu} = \mathbb{E}[\boldsymbol{x}]$ for LayerNorm, and $\bm{\mu}=0, \boldsymbol{\delta} = 0$ for RMSNorm.  
Here $\boldsymbol{\gamma}, \boldsymbol{\delta} \in \R^D$ and $\odot$ is elementwise multiplication. 
Many recent models such as Qwen\edits{~\cite{bai2023qwen}} and Llama\edits{~\cite{grattafiori2024llama}} use RMSNorm\edits{~\cite{zhang2019root}}. 

\begin{proposition}[Energy Descent]
    \label{prop:dE}
    The update rule (\ref{eq:update-rule}) results in 
    \edits{asymptotically}
    decreasing energy, $\dot E_A = E_A^{(t+1)}-E_A^{(t)} <0$, if the inference rate is $\bm\eta = c\ \mathrm{diag}(\gamma)$ with $c\in \R_{>0}$.
    \edits{This asymptotic behavior begins after a transient phase where previous tokens are converging.}
\end{proposition}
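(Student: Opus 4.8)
The plan is to pass to continuous time, treating the update \cref{eq:update-rule} as the flow $\dot{\mathbf{x}}_A=-\boldsymbol{\eta}\,\pp{E_A}{\mathbf{g}_A}$, and to compute $\dot E_A$ by the chain rule through the normalization map $\mathbf{g}$. Since $E_A=E^{\mathrm{AT}}_A+E^{\mathrm{FF}}_A$ depends on $\mathbf{g}_A$ and on the causal context $\{\mathbf{g}_B\}_{B<A}$ (through the attention sum in \cref{eq:E-AT}, while $E^{\mathrm{FF}}_A$ in \cref{eq:E-FF} depends on $\mathbf{g}_A$ alone),
\begin{align}
    \dot E_A = \Big(\pp{E_A}{\mathbf{g}_A}\Big)^{\top}\dot{\mathbf{g}}_A \;+\; \sum_{B<A}\Big(\pp{E_A}{\mathbf{g}_B}\Big)^{\top}\dot{\mathbf{g}}_B .
\end{align}
The second sum is precisely the transient term in the statement. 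Because the token dynamics is causal (the trajectory of $\mathbf{g}_{\le A}$ never depends on $\mathbf{g}_{>A}$), I would argue by induction on the token index that it vanishes asymptotically: the first token has no attention context, so the self-term estimate below already gives $\dot E_1\le 0$ outright, and in the inductive step, once tokens $1,\dots,A-1$ have settled one has $\dot{\mathbf{g}}_B\to 0$ for $B<A$ and the sum drops out, leaving only the self-term.

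For the self-term I would differentiate $\mathbf{g}_A=\mathbf{g}_A(\mathbf{x}_A)$ (these normalizations act per token across the embedding dimension) to get its Jacobian $\mathbf{G}_A=\pp{\mathbf{g}_A}{\mathbf{x}_A}$. A short computation gives $\mathbf{G}_A=\tfrac1{r_A}\,\mathrm{diag}(\boldsymbol{\gamma})\,\mathbf{P}_A$, where $r_A=\sqrt{\tfrac1D\|\mathbf{x}_A-\bm{\mu}\|^2+\epsilon}$ and $\mathbf{P}_A=\mathbf{Q}-\tfrac1{D r_A^2}(\mathbf{x}_A-\bm{\mu})(\mathbf{x}_A-\bm{\mu})^{\top}$, with $\mathbf{Q}=\mathbf{I}$ for RMSNorm and $\mathbf{Q}=\mathbf{I}-\tfrac1D\boldsymbol{1}\boldsymbol{1}^{\top}$ the centering projector for LayerNorm (the no-normalization case $\mathbf{g}=\mathbf{x}$ is the trivial limit $\mathbf{G}_A=\mathbf{I}$, $\boldsymbol{\gamma}=\boldsymbol{1}$). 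Because $\epsilon>0$ we have $\tfrac1{D r_A^2}\|\mathbf{x}_A-\bm{\mu}\|^2=\|\mathbf{x}_A-\bm{\mu}\|^2/(\|\mathbf{x}_A-\bm{\mu}\|^2+D\epsilon)<1$, and $\mathbf{x}_A-\bm{\mu}$ lies in the range of $\mathbf{Q}$, so $\mathbf{P}_A\succeq 0$ — strictly positive definite for RMSNorm / no normalization, and positive semidefinite with kernel $\mathrm{span}(\boldsymbol{1})$ for LayerNorm.

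The crux is then the choice $\boldsymbol{\eta}=c\,\mathrm{diag}(\boldsymbol{\gamma})$ with $c>0$: it is exactly what symmetrizes $\mathbf{G}_A\boldsymbol{\eta}=\tfrac{c}{r_A}\mathrm{diag}(\boldsymbol{\gamma})\,\mathbf{P}_A\,\mathrm{diag}(\boldsymbol{\gamma})$ into a positive semidefinite matrix, so that
\begin{align}
    \dot E_A\big|_{\mathrm{self}} &= -\Big(\pp{E_A}{\mathbf{g}_A}\Big)^{\top}\mathbf{G}_A\,\boldsymbol{\eta}\,\pp{E_A}{\mathbf{g}_A} \\
    &= -\frac{c}{r_A}\,\Big\|\,\mathbf{P}_A^{1/2}\,\mathrm{diag}(\boldsymbol{\gamma})\,\pp{E_A}{\mathbf{g}_A}\,\Big\|^2 \;\le\; 0 .
\end{align}
The inequality is strict away from fixed points: equality forces $\mathrm{diag}(\boldsymbol{\gamma})\,\pp{E_A}{\mathbf{g}_A}\in\ker\mathbf{P}_A$, which is $\{0\}$ for RMSNorm / no normalization, and for LayerNorm forces the gradient into the $\boldsymbol{1}$-direction — a direction along which $\mathbf{g}$, hence $E_A$ and the entire flow, is invariant, so this too corresponds to a fixed point of the effective dynamics. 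Combined with the transient argument this gives $\dot E_A<0$ once the context has converged. To recover the stated discrete decrement $E_A^{(t+1)}-E_A^{(t)}<0$ I would finally invoke a standard descent-lemma estimate, valid for $c$ small enough given a local Lipschitz constant for $\pp{E_A}{\mathbf{g}_A}$ (which exists since $\|\mathbf{g}_B\|\le\sqrt{D}\,\|\boldsymbol{\gamma}\|_\infty+\|\boldsymbol{\delta}\|$ is bounded under LayerNorm and RMSNorm).

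I expect the main obstacle to be making the transient argument quantitative: controlling $\sum_{B<A}(\pp{E_A}{\mathbf{g}_B})^{\top}\dot{\mathbf{g}}_B$ requires a convergence rate for the earlier tokens — a LaSalle/Lyapunov argument using that each $E_B$ is bounded below and eventually nonincreasing — and during the transient this sum can have either sign, which is exactly why the conclusion is only asymptotic. A secondary subtlety is that for LayerNorm the $\boldsymbol{1}$-direction must be quotiented out before one can claim strict descent, and one should use the per-token gradient $\pp{E_A}{\mathbf{g}_A}$ (as in \cref{eq:E-AT}'s induced update) rather than a total-energy gradient, since only the former preserves the causal/triangular structure that the induction relies on.
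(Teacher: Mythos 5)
Your proof is correct and follows essentially the same route as the paper's appendix argument: chain rule through the per-token normalization, the Jacobian $\partial \mathbf{g}_A/\partial \mathbf{x}_A = \tfrac{1}{r_A}\boldsymbol{\Gamma}\boldsymbol{P}_A$ with $\boldsymbol{P}_A$ an (approximate) projection and hence psd, induction on the token index to dispose of the $B<A$ cross-terms once earlier tokens have converged, and the choice $\boldsymbol{\eta}=c\boldsymbol{\Gamma}$ turning the self-term into the negative quadratic form $-\tfrac{c}{r_A}\bigl\|\boldsymbol{P}_A^{1/2}\boldsymbol{\Gamma}\,\partial E_A/\partial\mathbf{g}_A\bigr\|^2$. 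Your added touches --- the descent-lemma step to pass from $\dot E_A\le 0$ to the discrete decrement, and the explicit quotienting of the $\boldsymbol{1}$-direction under LayerNorm to argue strictness --- are refinements that the paper's argument leaves implicit.
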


\begin{proof}[Sketch of proof]
    See \cref{ap:dE-neg} for full proof.
    The Jacobian of $\boldsymbol{g}_A$ can be written as 
    $ {\partial \boldsymbol{g}_A/ \partial \boldsymbol{x}_A}  = {1\over r_A} \boldsymbol{\Gamma P}_A, $ where $\boldsymbol{\Gamma} = \mathrm{diag}(\gamma)$, $r_A>0$ is some norm of $\boldsymbol{x}_A$ and $\boldsymbol{P}_A$ is an approximate $D\times D$ projection matrix and p.s.d. 
    Using this and (\ref{eq:update-rule}), \edits{once $\dot{\vx}_B=0$ for $B<A$,} we get 
    $\dot E_A 
    = -r_A^{-1}\sum_A \Tr{\partial_{\boldsymbol{g}_A} E_A \boldsymbol{\Gamma} \boldsymbol{P}_A \boldsymbol{\eta}^T \partial_{\boldsymbol{g}_A} E^T} $. 
    When $\boldsymbol{\eta} = \boldsymbol{\Gamma}$ we get $\dot E<0$.   
\end{proof}
There also exist $x$-dependent solutions of the form $\boldsymbol{\eta} = \boldsymbol{M(x)} \boldsymbol{P}_A\boldsymbol{\Gamma}$ 
where $\boldsymbol{M(x)}$ is an arbitrary positive semi-definite (psd) matrix and $c>0$, but they mean $\boldsymbol{\eta}$ is itself a neural network. 
To remain close to the structure of conventional transformers, we work with the $x$-independent $\boldsymbol{\eta} = c \boldsymbol{\Gamma}$ solution. 
In principle, $\boldsymbol{\eta}$ can also have a part contributing to the anti-symmetric part of $\boldsymbol{\Gamma P}_A \boldsymbol{\eta}^T$, but we could not find an $x$-independent solution for it.  
While $\boldsymbol{\eta}^{(t)} = c^{(t)} \boldsymbol{\Gamma} $ puts severe restrictions on the preconditioning matrix, each layer is still allowed to have a different $c^{(t)}$ constant. 
\edits{
Yet, as we show in \cref{ap:dE-neg}, LayerNorm together with Lipschitz activation functions in FF (e.g. ReLU) will lead to a bounded energy, which cannot explode. 
In this case, a generic $\bm{\eta}$ may still yield convergence. 
In fact, only a small set of $\bm{\eta}$ which yield perpetually oscillating solutions may not lead to convergence, although the exact conditions need to be derived.  
}

\paragraph{Preconditioner without layer normalization.}
Several works have shown careful initialization and scaling  can replace normalization entirely~\citep{huang2020improving, wang2024deepnet}. 
Doing so would remove much of the restrictions on the preconditioner $\eta$. 
In an $T$-layer transformer, Instead of layer normalization, 
T-Fixup \citep{huang2020improving} and DeepNet \citep{wang2024deepnet} initialize some weights and $x$ by a scale proportional to $T^{-1/4}$ and change most layer outputs to $f(g) \to \alpha f(x)$, with $\alpha \propto T^{-1/2}$. %
In this case, $\eta$ becomes much less restricted. 

\begin{proposition}[$\dot E$ without normalization]
    When $\boldsymbol{g}= \boldsymbol{x}$, to get $\dot E <0$, the symmetric part, $\boldsymbol{\eta}_+ = (\boldsymbol{\eta}+\boldsymbol{\eta}^T)/2$ needs to be psd. 
\end{proposition}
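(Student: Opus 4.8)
The plan is to obtain this as a one-line specialisation of the computation already performed for \cref{prop:dE} in \cref{ap:dE-neg}, taking the degenerate normalisation $\boldsymbol{g}=\boldsymbol{x}$. First I would note that the Jacobian written there as $\partial\boldsymbol{g}_A/\partial\boldsymbol{x}_A = r_A^{-1}\boldsymbol{\Gamma}\boldsymbol{P}_A$ now trivialises: with no centring and no rescaling we have $\boldsymbol{\Gamma}=\boldsymbol{I}_D$, $r_A=1$ and $\boldsymbol{P}_A=\boldsymbol{I}_D$, so $\partial\boldsymbol{g}_A/\partial\boldsymbol{x}_A=\boldsymbol{I}_D$ and $\partial_{\boldsymbol{x}_A}E_A=\partial_{\boldsymbol{g}_A}E_A$. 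I keep the conventions of \cref{prop:dE}: $\dot E$ is the per-token increment $E_A^{(t+1)}-E_A^{(t)}$ evaluated to first order in the step size, and the claim is made in the post-transient regime where the earlier tokens $\boldsymbol{x}_B$, $B<A$, have already stopped moving.

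Next I would differentiate $E_A$ along the trajectory. Because $E_A$ depends on $\boldsymbol{x}_B$ only for $B\le A$, once $\dot{\boldsymbol{x}}_B=\boldsymbol{0}$ for all $B<A$ the only moving contribution is through $\boldsymbol{x}_A$ itself, so $\dot E_A = \langle\partial_{\boldsymbol{x}_A}E_A,\ \dot{\boldsymbol{x}}_A\rangle$. Substituting the update rule (\ref{eq:update-rule}), which for $\boldsymbol{g}=\boldsymbol{x}$ reads $\dot{\boldsymbol{x}}_A=-\boldsymbol{\eta}\,\partial_{\boldsymbol{g}_A}E_A=-\boldsymbol{\eta}\,\partial_{\boldsymbol{x}_A}E_A$, gives
\[
\dot E_A \;=\; -\,\boldsymbol{v}_A^{\!\top}\boldsymbol{\eta}\,\boldsymbol{v}_A, \qquad \boldsymbol{v}_A := \partial_{\boldsymbol{x}_A}E_A ,
\]
which is exactly the trace expression of \cref{prop:dE} with $r_A^{-1}\boldsymbol{\Gamma}\boldsymbol{P}_A$ replaced by the identity.

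Then I would split $\boldsymbol{\eta}=\boldsymbol{\eta}_+ +\boldsymbol{\eta}_-$ into symmetric and antisymmetric parts; since $\boldsymbol{v}^{\!\top}\boldsymbol{\eta}_-\boldsymbol{v}=0$ for every $\boldsymbol{v}$, we get $\dot E_A=-\boldsymbol{v}_A^{\!\top}\boldsymbol{\eta}_+\boldsymbol{v}_A$. If $\boldsymbol{\eta}_+\succeq0$ then $\dot E_A\le0$ for every reachable configuration (and $\dot E_A<0$ away from critical points of $E_A$ whenever $\boldsymbol{\eta}_+\succ0$), which is sufficiency. For \emph{necessity} — the direction the statement asserts — suppose $\boldsymbol{\eta}_+$ has an eigenvector $\boldsymbol{u}$ with negative eigenvalue; I would argue that, absent any structural constraint forcing the token-gradients $\partial_{\boldsymbol{x}_A}E_A$ into a fixed proper subspace, there are reachable states at which $\boldsymbol{v}_A$ is (anti)aligned with $\boldsymbol{u}$, and at such a state $\dot E_A>0$. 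Hence demanding $\dot E\le0$ along all post-transient trajectories forces $\boldsymbol{\eta}_+\succeq0$.

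I expect the only genuine difficulty to be this necessity step: turning the informal claim that the gradient can point along $\boldsymbol{u}$ into a precise statement requires a mild nondegeneracy hypothesis on the energy, and one must keep the whole claim inside the post-transient regime, because while the $B<A$ tokens are still moving the extra terms $\langle\partial_{\boldsymbol{x}_B}E_A,\ \dot{\boldsymbol{x}}_B\rangle$ enter $\dot E_A$ with uncontrolled sign for any choice of $\boldsymbol{\eta}$ — the same caveat already carried by \cref{prop:dE}. Everything else is the routine quadratic-form bookkeeping inherited from \cref{ap:dE-neg}, so I would present this proposition as a short corollary of that computation rather than redoing it.
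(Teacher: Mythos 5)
Your argument is essentially the paper's own proof of this proposition: both compute $\dot E_A = -(\partial_{\boldsymbol{x}_A}E_A)^{T}\boldsymbol{\eta}\,(\partial_{\boldsymbol{x}_A}E_A)$ and observe that only the symmetric part $\boldsymbol{\eta}_+$ survives in the quadratic form. You are more explicit than the paper about two things it glosses over --- that this per-token identity requires the post-transient regime so the cross-terms $\langle\partial_{\boldsymbol{x}_B}E_A,\,\dot{\boldsymbol{x}}_B\rangle$ for $B<A$ drop out, and that the proposition is worded as a necessity claim while the computation only gives sufficiency (with necessity needing a nondegeneracy hypothesis on the reachable gradients) --- but the underlying calculation is the same.
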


\begin{proof}
    In this case $\dot{\boldsymbol{x}} = -\boldsymbol{\eta} \ro_{\boldsymbol{x}}E$ and
    \begin{align}
        \dot E &= -\sum_A \Tr{\ro_{\boldsymbol{x}_A} E \boldsymbol{\eta} \ro_{\boldsymbol{x}_A} E^T} 
        = -\sum_A \Tr{\ro_{\boldsymbol{x}_A} E \boldsymbol{\eta}_+ \ro_{\boldsymbol{x}_A} E^T}  
    \end{align}
    which is negative when $\boldsymbol{\eta}_+$ is psd.
\end{proof}
The antisymmetric part $\boldsymbol{\eta}_- = (\boldsymbol{\eta}-\boldsymbol{\eta}^T)/2$ is not constrained by this and can be arbitrary at this point. 
Hence, without layer normalization, we can have $\boldsymbol{\eta}^{(t)}$ as learnable parameters of the form
\begin{align}
    \boldsymbol{\eta} &= \boldsymbol{U}^T \boldsymbol{U} + \boldsymbol{V}-\boldsymbol{V}^T, & \boldsymbol{U} \in \R^{D\times D'}, \boldsymbol{V} \in \R^{D\times D}  
\end{align}
where $D'$ can be arbitrary, and each layer (depth) can have a different learnable $\boldsymbol{U}^{(t)}, \boldsymbol{V}^{(t)}$. 
\subsection{Asymptotic Stability of NRGPT \label{sec:asympt-dyn}}

A peculiar aspect of NRGPT is the phenomenon of asymptotic stability. 
In order to illustrate it, consider a simplifying case when the inference rate matrix $\boldsymbol{\eta}$ is identity. 
In this case dynamical equations for tokens can be written as 
\begin{equation}
    \dot{x}_{iA} = - \frac{\partial E_A}{\partial g_{iA}}
\end{equation}
Additionally, 
$\bm{\gamma}$ can always be absorbed into $\mJ$ and the weights of the FF model. 
This way, the Jacobian becomes $\partial \vg_A/\partial {\vx_A}  = \mP_A$, which is p.s.d.. 
The key observation is that due to causal attention mask the energy $E_A$ of token $A$ only depends on the states of tokens $B\leq A$. Thus, for the first token
\begin{equation}
    \dot{\boldsymbol{x}_1} = - \frac{\partial E_1}{\partial \boldsymbol{g}_{1}}
\end{equation}
Since the energy of that token decreases with time 
\begin{equation}
    \dot{E}_1 = \frac{\partial E_1}{\partial \boldsymbol{g}_1} \frac{\partial \boldsymbol{g}_1}{\partial \boldsymbol{x}_1} \frac{d\boldsymbol{x}_1}{dt} 
    = - \dot{x}_1^T\mP_1 \dot{x}_1 \leq 0
\end{equation}
since $\mP_1$ %
is psd. 
Additionally, since energy only depends on $\boldsymbol{g}$ -- layernormalized tokens -- it is bounded from below. Thus, the dynamics of $\boldsymbol{g}$ has to converge to a fixed point. This means that after a transitory period of time $T_\text{tr}$ the derivative $\frac{d\boldsymbol{g}_1}{dt}$ vanishes. 

Now, consider the network of two tokens
\begin{equation}
    \dot{E}_2 = \frac{\partial E_2}{\partial \boldsymbol{g}_1} \frac{\partial \boldsymbol{g}_1}{dt}+ \frac{\partial E_2}{\partial \boldsymbol{g}_2} \frac{\partial \boldsymbol{g}_2}{\partial \boldsymbol{x}_2} \frac{d\boldsymbol{x}_2}{dt} 
    = - \dot{x}_2^T\mP_2 \dot{x}_2 \leq 0
\end{equation}
the second equality is written assuming that we are looking at this quantity at $t>T_\text{tr}$. Thus the first term is zero. Same argument applies, the energy decreases with time and is bounded from below. Thus, eventually $\boldsymbol{g}_2$ freezes.

One can apply this argument recursively to each token and conclude that after a transitory period of time, all tokens stabilize and all $\boldsymbol{g}_A$ will eventually become constants. From the perspective of energy profiles, this leads to the following behavior: during transitory regime energies of individual tokens will evolve in time (they can both increase and decrease). After that transitory period is over the energies must stabilize and reach their constant values that become unchanged in the future. One can run the inference dynamics as long as desired after that, but no changes in energies will occur. This behavior is apparent from the numerical profiles of energies, see \autoref{fig:long_trajectory}. It is a distinct aspect of our models - the phenomenon that we call asymptotic stability.   %
\section{Experiments \label{sec:experiment}}

We tested our model on three datasets: ListOps, Shakespeare and Open Web Text (OWT). 
The details of the experimental settings and hyperparameters can be found in \cref{ap:experiments}. 
To evaluate the quality of text in the Shakespeare and OWT experiments, we use a number of metrics, including perplexity and diversity scores, explained in \cref{ap:exp:metrics}.

\paragraph{Model Variants.}
The choice of the energy function is equivalent to the choice of architectures in neural networks. 
As our goal is to be as close to the original transformer architecture as possible, we experimented with a few settings for the FF network and found the following variants to be the best performing:
\begin{enumerate}
    \item \texttt{NRGPT\_H\_FF1}: $E^\mathrm{FF} = -\norm{\sigma(\mW \vg_A)}^2$, same as in \eqref{eq:E-FF}, but with $\sigma=\mathrm{GELU}$.
    \item \texttt{NRGPT\_H\_FF2W}: $E^\mathrm{FF} = -\sum_A \vg_A^T \mW^2 \sigma(\mW^1 \vg_A)$, which yields
    \begin{align}
        \label{eq:E_FF2W}
        \mathrm{FF}(\vg_A) = -\eta \pa{\mW^2 \sigma(\mW^1 \vg_A) +  \sigma'(\mW^1 \vg_A)^T\odot {\mW^1}^T  \sigma(\mW^2 \vg_A)}
    \end{align}
    Here, the first term is the conventional FF of transformers, but the second is a somewhat odd network. 
\end{enumerate}
In case with two weights, we choose the hidden dimension between $\mW^1$ and $\mW^2$ to be $4\times D$. 
All of these performed well, with the residual version showing best performance on ListOps, learning at even smaller sizes than our baseline, while also easily training at larger sizes with embedding size over 256.
However, since some of these models deviate significantly from the FF of a recurrent GPT (the gradient results in an FF module with four layers), we decided to focus more on \texttt{NRGPT\_H\_FF1} and \texttt{NRGPT\_H\_FF2W},  which are much closer to the GPT FF.  
As Baselines, we used \texttt{GPT\_Rec\_parallel}, 
which is a GPT-J model with a single transformer layer, feeding back recurrently into itself for a fixed number of times (mimicking number of layers). 
On Shakespeare and OWT, we also show results of a conventional GPT2-style deep transformer model.

\begin{wrapfigure}[21]{r}{0.4\textwidth}
    \centering
    \vspace{-5pt}
    \includegraphics[width=\linewidth]{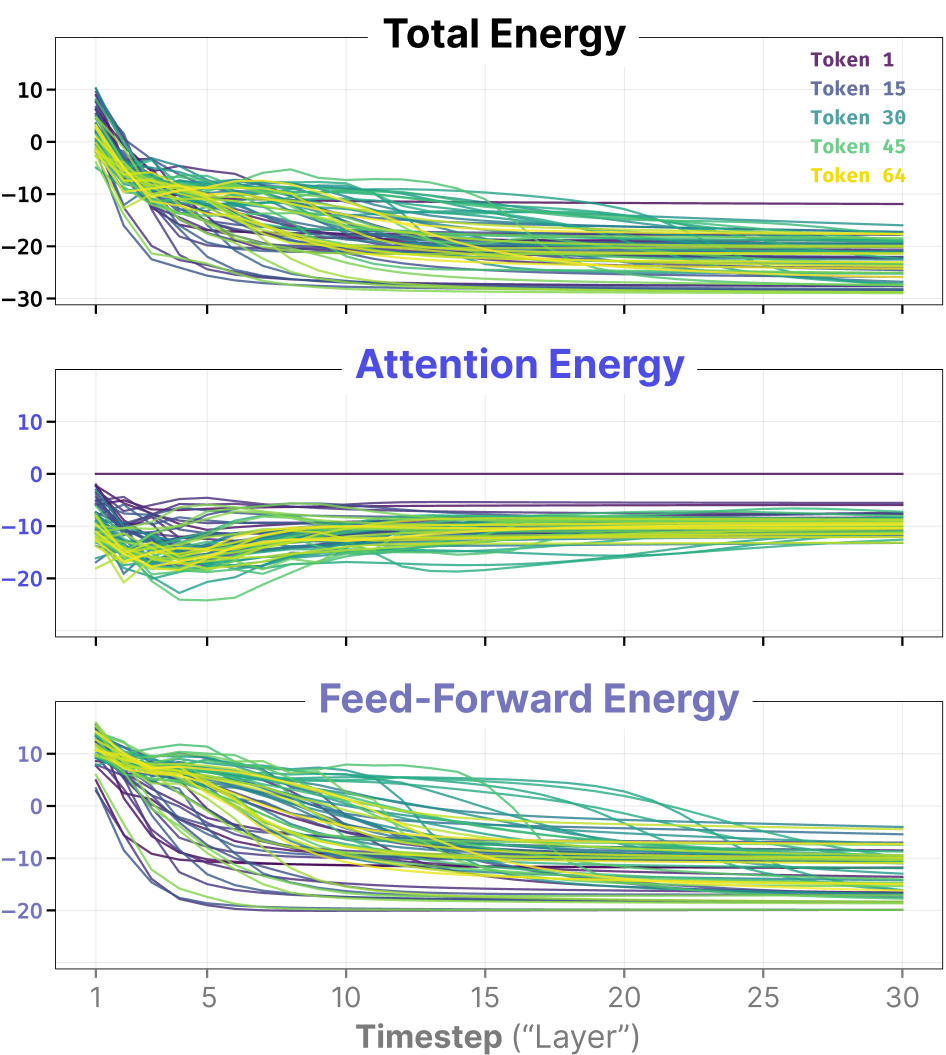}
    \caption{In NRGPT, tokens converge to stable states of low energy where the causal attention mask allows each token energy to fluctuate during inference. Shown are 64 tokens passed to an NRGPT model trained to predict ListOps equations.}
    \label{fig:long_trajectory}
\end{wrapfigure}

\subsection{Energy dynamics}

NRGPT without constraints on the inference rate $\eta$ is not forced to strictly decrease energy during inference and it may learn other exploration strategies for inference. 
Nevertheless, we would like to examine whether models which are explicitly forced to perform GD and reduce energy during inference can learn the tasks well. 
To better understand how our gradient-based update rule performs inference, we ran experiments on ListOps with large number of recurrent steps (30 steps). 
For these experiments, we set $\eta = 1$, which according to \cref{prop:dE} forces the update rule to decreases energy. 
\cref{fig:long_trajectory} shows the evolution of total $E$, $E^\mathrm{AT}$ and $E^\mathrm{FF}$ along these trajectories. 
We observe that indeed in all trajectories the final energy is lower than initial. 
Each individual token trajectory is not required to be monotonically decreasing, as the dynamics of tokens is coupled. 
however, in accordance with our result in \cref{sec:asympt-dyn}, after a transient stage, once all previous tokens start converging, the energy of the next token monotonically decreases. 

\subsection{ListOps\label{sec:listops}}
We perform experiment on nested math operations on lists of integers, which are a version of ListOps \citep{nangia2018listops}. 
Our ListOps setting consists of three functions: maximum, median and sum modulo 20. 
Our inputs range from 0 to 19. 
Each data sample begins with nested equations like
\texttt{SUM(2,MAX(4,13,1),MEDIAN(5,3,16))}. 
As performance metrics, we looked at accuracy on the mixed task, as well as the training and validation loss. 
\cref{fig:listops-transition-main} shows the results for two of our model variants, \texttt{NRGPT\_H\_FF1} and \texttt{NRGPT\_H\_FF2W} as compared to \texttt{GPT\_Rec\_parallel}. %

\begin{figure}[t]
    \centering
    \includegraphics[width=\linewidth]{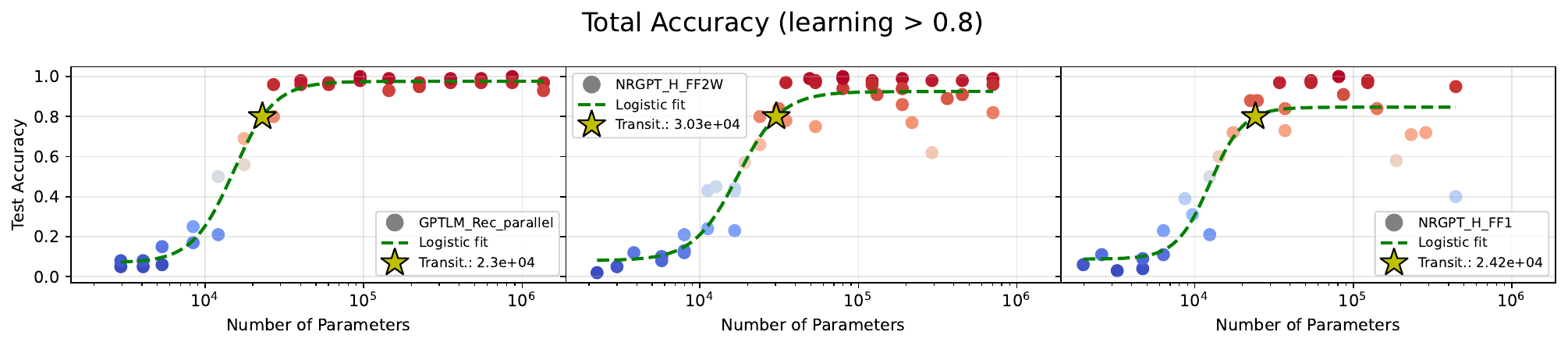}
    \includegraphics[width=\linewidth]{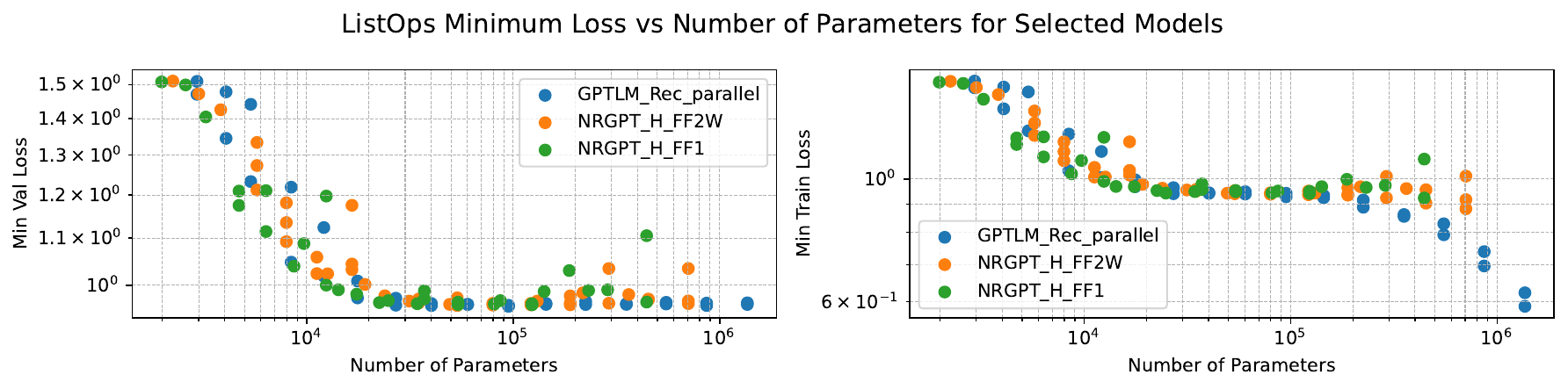}
    \caption{\textbf{Learning ListOps:} NRGPT variants match performance with a recurrent GPT model on ListOps accuracy parameter-transition points (top) and training/validation losses (bottom). The accuracy of models is tested on nested, mixed arithmetic tasks of maximum, median and sum modulo 20.
    For all plots, the x axis shows the \emph{total parameter count} of the model. The yellow star indicates the transition to learning, which we define as where the logistic fit hit $>80\% $ accuracy. The baseline model \texttt{GPT\_Rec\_parallel} shows the earliest learning transition at size $2.3 \times 10^4$, but our NRGPT variants are also similar, with \texttt{NRGPT\_H\_FF1} at $2.4 \times 10^4$ and  \texttt{NRGPT\_H\_FF2W} at $2.98 \times 10^4$. 
    }
    \label{fig:listops-transition-main}
\end{figure}

\subsection{Shakespeare}
We compare the performance of our \texttt{NRGPT\_H\_FF2W} and \texttt{NRGPT\_H\_FF1} with \texttt{GPT\_Rec\_parallel} and deep GPT for embeddings sizes less than 1024 (\cref{fig:shakespeare-loss}
In larger sizes, we ran many sweeps to find suitable hyperparameters such as the range of learning rates, resulting in the wide spread. 
Interestingly, our best models at large sizes achieve validation losses similar to the GPT baselines, but do so with much less overfitting to the training set. 

\begin{figure}[t]
    \centering
    \includegraphics[width=0.8\linewidth]{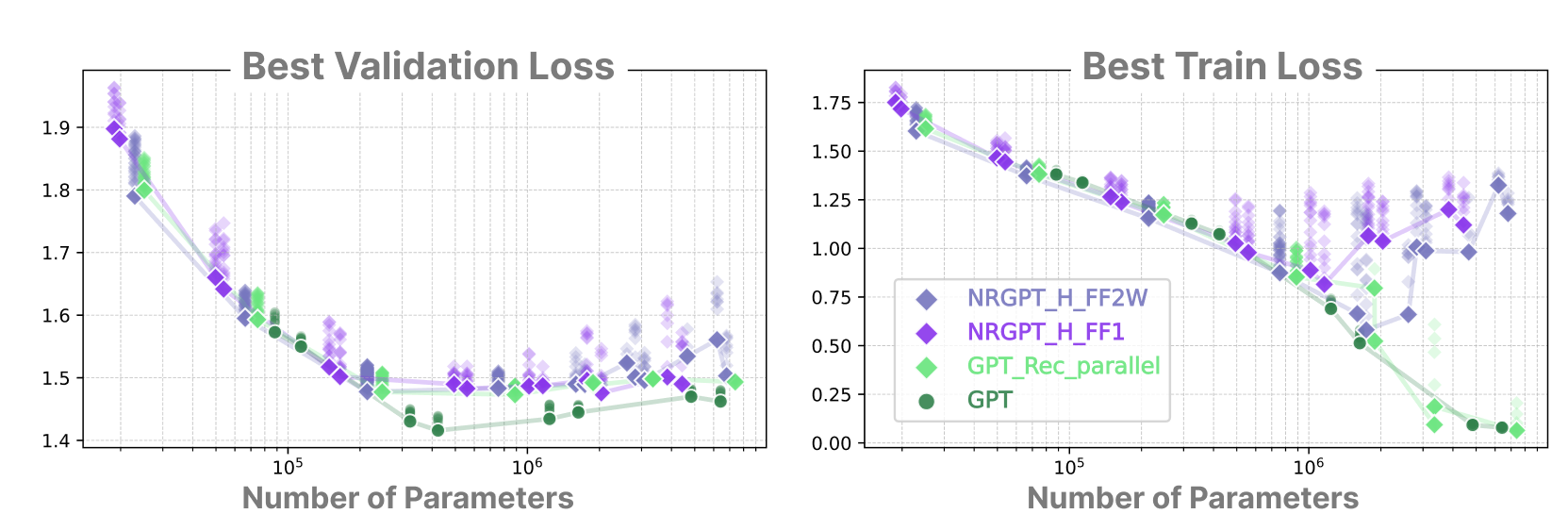}
    \caption{\textbf{Shakespeare scaling:} NRGPT achieves performance parity with recurrent GPT on Shakespeare across parameter sizes, as measured by \emph{best validation loss} per number of parameters. For many embedding sizes, NRGPT also follows the same optimal training loss trajectory-per-parameter as both GPT and recurrent GPT baselines. However, NRGPT does not overfit Shakespeare at large parameter sizes. Connecting lines show the best performance at fixed parameter sizes. Transparent dots show different choices of hyperparameters --- a larger spread indicates more sensitivity to hyperparameters. See \cref{ap:exp:shakespeare} for details.}
    \label{fig:shakespeare-loss}
\end{figure}

\subsection{Open Web Text}
\cref{tab:owt:best} shows the best model configuration for baseline GPT, RGPT-parallel, and our model NRGPT with the respective generation quality metrics.  We see that the generation quality of NRGPT is very competitive with GPT and RGPT-parallel while it contains around 34M less parameters than GPT. \cref{fig:owt-generation} shows example of generated text by GPT, RGPT-parallel and NRGPT for which the generation quality metrics are provide in \cref{tab:owt:best}. \edits{In these experiments, we consider transformer blocks configured at the same \emph{width} --- please see \cref{ap:exp:openwebtext} for more experiments across equal parameter counts, including experiments that show NRGPT's advantage on downstream tasks such as MMLU~\citep{hendrycks2020measuring}.}

\begin{table}%
    \centering
    \caption{OWT performance at $n_{embed}=768$. }
    \label{tab:placeholder}
    \resizebox{0.8\textwidth}{!}{%
        \begin{tabular}{lccccc}
        \toprule
        &\multicolumn{4}{c}{\textbf{Training loss}} \\
        \textbf{Model} & mean $\pm$ std & min & max & \# runs  & \# Param.  \\
        \midrule
        GPT & \textbf{2.905 $\pm$ 0.006} & \textbf{2.900} & \textbf{2.914} & 5 & 124M\\
        GPT\_Rec\_parallel & 3.447 $\pm$ 0.046 & 3.395 & 3.494 & 5 & 85M\\
        NRGPT\_H\_FF2W & 3.456 $\pm$ 0.076 & 3.391 & 3.540 & 3 & 90M \\
        \bottomrule
        \toprule
        &\multicolumn{4}{c}{\textbf{Validation loss}}  \\
        \toprule
        \textbf{Model}  & mean $\pm$ std & min & max & \# runs & \# Param. \\ 
        \midrule
        GPT & \textbf{2.921 $\pm$ 0.005} & \textbf{2.915} & \textbf{2.929} & 5 & 124M \\
        GPT\_Rec\_parallel & 3.454 $\pm$ 0.037 & 3.411 & 3.491 & 5 & 85M \\
        NRGPT\_H\_FF2W & 3.467 $\pm$ 0.073 & 3.404 & 3.548 & 3 & 90M  \\
        \bottomrule 
        \end{tabular}
    }
\end{table}

\begin{table}%
\caption{Best Model Configurations and Quality Metrics for OWT. Note abbreviations: no of parameters $\to$ n\_param, grammar quality score $\to$ gqs, average pariwise cosine similarity $\to$ apcs, distinct-1 $\to$ d-1 and distinct-2 $\to$ d-2.}
\vspace{0.12in}
\label{tab:owt:best}
\centering
\resizebox{\textwidth}{!}
{\footnotesize\begin{tabular}{l|cccccc|ccccc}
\toprule
\multirow{2}{*}{\bf Model}  & \multicolumn{6}{c}{\bf Configuration}  & \multicolumn{5}{c}{\bf Metrics} \\
\cmidrule{2-12}
& lr & min\_lr & n\_layer & n\_head & n\_embed & n\_params & perplexity & gqs & apcs & d-1 & d-2  \\
\midrule
GPT & 7e-4 & 7e-5 & 12 & 12 & 768 & 124M & \textbf{75} & \textbf{0.978} & 0.306 & 0.619  & 0.965 \\
GPT\_Rec\_Parallel & 6e-4 & 4e-4 & 12 & 12 & 768 & \textbf{85M} & 99 & 0.976 & \textbf{0.336} & 0.615  & 0.975 \\
NRGPT\_H\_FF2W  & 1e-4 & 7e-5 & 12 & 12 & 768 & 90M & 104 & 0.966 & 0.306 & \textbf{0.674}  & \textbf{0.984} \\

\bottomrule
\end{tabular}}
\end{table}

\section{Limitations \label{sec:limit}}
NRGPT is an appealing theoretical construct for the inference process of GPT. 
In our experiments, we observe that NRGPT can achieve similar performance to GPT and its recurrent variants on ListOps, Shakespeare, and OWT. 
However, NRGPT is computationally the gradient of an energy, which enforces weight sharing and limits how flexibly we can parameterize the architecture. 
We observe that this constraint also causes a larger amount of hyperparameter sensitivity than GPT variants. \edits{We also notice that the configurations of NRGPT that achieve competitive performance at comparable parameters actually have a slightly larger FLOP count than their standard transformer counterparts.} Please see our extended discussion in \cref{ap:flops}.
In contrast to standard transformers, increasing the number of attention heads in NRGPT actually increases the parameters. 
We additionally observe that NRGPT has a more difficult time overfitting the training set, which is beneficial in small data regimes but is undesirable in the massive datasets used to train modern LLMs.

\section{Discussion and Conclusions}
We have presented NRGPT, a minimal modification of the GPT architecture that unifies autoregressive language modeling with energy-based modeling. 
Our analysis shows that under specific conditions on the inference rate matrix $\boldsymbol{\eta}$, this process provably decreases energy after a transient period of time, providing a principled foundation for the dynamics, a phenomenon that we call asymptotic stability.
Moreover, relaxing this constraint allows the model to learn its own energy exploration strategy for inference. 
Thus, our work complements previous studies  suggesting that transformers perform GD during inference. 
Unlike past work, in our model inference is explicitly a gradient-based dynamics, while still maintaining an architecture very similar to GPT. 
Our experiments show that this framework performs comparably to a fully recurrent GPT model across parameter sizes while generally requiring fewer parameters. 
NRGPT represents a meaningful step toward understanding the architecture of transformers using energies. 

\section{Reproducibility Statement}
\label{ap:reproducibility}

The code for our model, experiments, and analysis is available at \url{https://github.com/bhoov/nrgpt} in a self-contained environment. The code began as a fork of the excellent \href{https://github.com/karpathy/nanoGPT}{nanoGPT repository} and as such all experiments and models are implemented using PyTorch~\citep{paszke2019pytorch}. Each Shakespeare and ListOps experiment was conducted on a single H100 GPU. OWT experiments were conducted over 4 nodes of 8xH100 GPUs.

\section*{Acknowledgments}
The results presented here were obtained while Dmitry Krotov was employed by IBM Research. At the time of the camera-ready submission Dmitry Krotov is no longer employed by IBM Research.
 
\bibliographystyle{abbrvnat}
\bibliography{energytransformer}

\newpage
\appendix

\section{LLM Usage Statement}
\label{ap:llm-usage}

LLMs were not used for ideation, experiments, or model design. LLM generated code helped with experimental analysis (e.g., plot layouts) and grammar checking of the submission.

\edits{
\section{NRGPT compared to EBT and ET}
\label[appsec]{ap:distinction}

\subsection{Distinction from EBT}

Both the Energy-Based Transformers (EBT) of \citep{gladstone2025energy} and NRGPT are methods that can be used to minimize an explicit energy during autoregressive generation. However, the methods differ in \emph{how} that energy is modeled. Specifically, the EBT paper uses standard, feed-forward transformer architectures with modified attention masks and a standard loss function on predicted tokens to turn the transformer predictions into an energy, whereas we use a novel, causal energy formulation of the transformer block inspired by the Energy Transformer~\citep{hoover2023energy}. 

NRGPT is most comparable to the autoregressive EBT. The EBT paper mentions that ``the autoregressive EBT presents greater implementation challenges, primarily due to the potential for information leakage in naïve implementations'', and they discuss the design challenges in Sec C.3 of their Appendix, which requires reframing how tokens are processed by the standard transformer architecture. To prevent information leakage, they duplicate a sequence of tokens into \emph{observed} tokens and \emph{predicted} tokens while carefully tuning both the attention masks and the contraction over observed values.

We believe that NRGPT's solution for a causal EBM is more succinct and elegant than EBT's, preventing duplicate tokens and the need for careful tuning of attention masks. Our solution is expressed in \cref{eq:E-NRGPT}, where $E_A$ describes the energy to predict token $A$. By restricting the summation of the \texttt{logsumexp} to include summation only over previous tokens $B<A$, and then by minimizing the energy w.r.t. token $\mathbf{g}_A$, we are guaranteed to propagate information causally without any of the engineering tricks of EBT. 

\subsection{Distinction from ET}
The architecture of Energy Transformer (ET)~\citep{hoover2023energy} is more similar to that of NRGPT, but it is fundamentally different. This is because ET is designed for MASKed-token prediction tasks whereas NRGPT is a special energy function designed for \emph{autoregressive token prediction}, a paradigm that is not compatible with the original ET's bidirectional attention design (where attention signal propagates both forward and backwards in time such that past tokens can attend to future tokens). A simple causal mask on ET attention breaks ET's guarantee of monotonic energy minimization.

The key innovation of NRGPT is to model the sequence as a collection of token-wise energies $E_A$ for token index $A$, rather than a global sequence energy $E = \sum_A E_A$. We discover that tokens still converge even when all tokens minimize their individual energy simultaneously (see \cref{fig:long_trajectory}), and we argue that this generalizes the ideas of ET to allow tokens to explore a meaningful energy landscape during causal token generation. 

We also theoretically and empirically study the \textbf{projection matrix} that is present in all standard transformer attention but that is \emph{noticeably absent} in ET's attention formulation. We interpret this matrix as an ``inference rate'' matrix $\boldsymbol{\eta}$ in the gradient descent step, where under certain conditions we are guaranteed to maintain token convergence. Including this projection matrix and playing with various choices for $E^\mathrm{FF}$ of \cref{eq:E-NRGPT} makes the NRGPT block \emph{as expressive} as a recurrent, standard GPT block that can arguably go toe-to-toe with similarly configured GPT models on causal language modeling tasks up to the size of OWT (see \cref{tab:owt:best-similar-params} and \cref{tab:shakespeare:best-ap}). 
These conclusions and experiments were not evident in the original ET paper.

In summary, NRGPT is distinct from ET because:

\begin{enumerate}
\item \textbf{NRGPT performs causal language modeling by minimizing a per-token energy}. 
ET was restricted to strict energy minimization of an entire sequence, a paradigm that is not compatible with the parallel, autoregressive language modeling of GPT-style transformers.
\item \textbf{NRGPT uses learnable inference rate matrices} $\boldsymbol{\eta}$ during token prediction. Meanwhile, ET was restricted to a fixed, scalar gradient descent step which did not allow additional exploration of the energy landscape.
\item \textbf{NRGPT explores alternative energy-replacements for the feed-forward (FF) MLP module}. ET used a single-layer Hopfield Network with energy $G(\boldsymbol{\xi} \mathbf{g}_A)$, which results in the weights of the two layers to be $\boldsymbol{\xi}$ and $\boldsymbol{\xi}^T$. In NRGPT, we explore a more general form $E^{\mathrm{FF}}$ (e.g., \cref{eq:E_FF2W}) for the feed-forward module and find improved results on the causal language modeling task.
\end{enumerate}

}

\edits{
\section{Advantages of EBMs}
\label[appsec]{ap:advantages-ebms}

EBMs offer a paradigm for generative modeling that is inherently about optimization, where the model samples a new token from a \textbf{transition probability} described by the prior context. Conventional transformers are trained to \emph{implicitly} learn this transition probability, but it is hidden in the architectural design. The key appeal of an EBM (like NRGPT) is that it models the transition probability \emph{explicitly}. This offers several advantages which we hope to explore in future work: 

\begin{enumerate}
    \item \textbf{Systematic exploration of the solution space}. An explicit likelihood function enables us to systematically explore the space of solutions in LLMs using well-established methods in optimization and statistical physics, such as alternative gradient descent methods, minimum-energy paths, saddle-point analysis, and metastability. These tools are not available when the energy is implicitly encoded in a deep architecture, and we believe that finding alternative or creative solutions may correspond to exploring different local minima of the energy during inference. It also means that we can now view the forward process of causal LMs like GPT as a formal optimization process, which despite the prevalence of research around \emph{in-context learning} (ICL)~\citep{dong2024survey}, is not a widespread belief. 

    \item \textbf{Variable computation using early stopping criteria}. Both the energy and the norm of the energy's gradient can be used as signals to stop model computation ``early'' for easier problems, or to continue thinking longer for harder ones. This is a primary motivation of other explicit EBM language models like EBT~\citep{gladstone2025energy}, and we discuss this advantage further below. 

    \item \textbf{Model alignment using energy regularizers}. Note that \cref{eq:E^AT+E^FF} of our paper shows NRGPT's architecture as the sum of two energies: a token-mixing \emph{attention energy} $E^\mathrm{AT}$ and a token-wise \emph{feed-forward energy} $E^\mathrm{FF}$. Per the precedent of ET~\citep{hoover2023energy}, these are chosen to be faithful to the original transformer's design. However, any scalar objective can theoretically be added to NRGPT's block, and we imagine that adding regularizer terms to bias the energy landscape toward favorable solutions (or away from undesirable ones) is a unique benefit of EBMs that is more robust to prompt injection attacks than current LLMs. 
\end{enumerate}

\paragraph{Early stopping} Adaptive stopping criteria for ``early stopping'' is an incredible advantage for EBMs over traditional methods in the context of language modeling. 
If you can guarantee that all of NRGPT's tokens $B<A$ are fixed, then the energy of token $A$ is guaranteed to decrease, and early stopping based on energy values and gradient norms (as you suggested) would just work --- we are iteratively improving a token's embedding until the convergence guarantees are met and there is no purpose to ``thinking longer''. This iterative improvement needs no fine-tuning as it is baked into the model design. 
However, the constraint of a fixed preceiding context has a strong *disadvantage*: doing this would discard the transformer's unique advantage of full parallelism across tokens. For the causal energy formulation with parallel token evolution studied in this work, we are actually not guaranteed to monotonically decrease the energy and energy deltas and gradient norms would be unreliable (see \cref{fig:long_trajectory} for token-wise energy trajectories). 
}

\edits{

\section{On FLOP Complexity}
\label[appsec]{ap:flops}

NRGPT is parameter efficient compared to standard GPT and recurrent GPT, but how does it compare on the FLOP cost of the model? When we consider the FLOPs/block (per iteration), we discover that NRGPT is anywhere from $1{-}2{\times}$ the FLOP cost of an equivalent RecGPT at constant parameters, where the factor of $2$ difference appears depending on which FF variant we use (\texttt{NRGPT\_H\_FF1} or \texttt{NRGPT\_H\_FF2W}). We expand on the contributions of our architectural choices below:

\begin{itemize}
\item \textbf{NRGPT Attention}: For a single head, the attention block of NRGPT costs approximately the same FLOPS as recurrent and normal GPT. In fact, NRGPT uses slightly fewer FLOPs since the key weights $\boldsymbol{J}\mathbf{g}_B$ are used for both the \texttt{keys} and the \texttt{values} of the attention update (in contrast to the standard attention where distinct values $\boldsymbol{W}^V \mathbf{g}_B$ are computed for each key $B$). For multiple heads however, NRGPT costs more because our current solution uses $D\times D$ weights $J_h$ \emph{per head}, while GPT attention uses increasingly narrow KQV matrices as you increase the number of heads (i.e., the dimension of $\boldsymbol{W}^V$ is of shape $D \times (D/H)$, which decreases as $H$ becomes larger). This is a convention that can be easily adapted to NRGPT but we did not explore in this work.

\item \textbf{NRGPT FF}: This FLOP count for this FF module can vary, but in this paper we tested configurations that are always ${\sim}2{\times}$ the FLOP count of a standard transformer's FF of the same width. We discuss the two FF modules studied in this work below:
\begin{enumerate}
    \item \textbf{FF1}, where $E^{FF} = \| \sigma(\boldsymbol{Wg})\|^2$. In ListOps, we found that FF1 competes with Rec-GPT only when the hidden dimension of $\boldsymbol{W}$ was $8D$ instead of the standard $4D$ used in GPT. 
    This makes FF1 a constant-parameter operation, but because the wider weight matrix is used twice it actually doubles the FLOPS cost of NRGPT.
    \item \textbf{FF2W}, where $E^{FF} = \boldsymbol{g}^T \boldsymbol{W^2} \sigma(\boldsymbol{W^1g})$. In this config, the update rule $-\boldsymbol{\eta } \nabla E $ leads to two terms, each being a two layer neural net. We keep the $4D$ expansion used by standard transformers and thus the same parameter count, but this leads to a ${\sim}2\times$ FLOPS in this module compared to standard transformers.
\end{enumerate}

\end{itemize}

Thus, though NRGPT is more parameter efficient in general, it seems that parameter savings need to be exchanged for FLOPs. Note that we use pytorch's \href{https://docs.pytorch.org/docs/stable/generated/torch.func.grad.html\#torch.func.grad}{functional autograd interface to} compute gradients through this network which slows down the wall-clock time evaluation compared to manual implementation of the update rule.
}

\section{Learning rate and preconditioner of the forward pass
\label[appsec]{ap:dE-neg}}

\begin{align}
    \mbox{\textbf{RMSNorm:}}\quad 
    g_{A i  }& = \gamma_ i  \frac{x_{A i }}{ \sqrt{ {1\over D}
    \sum_ i  x_{A i }^2}} 
    = \sqrt{D} \gamma_ i  \hat{x}_{A i } 
    \label{eq:RMSNorm}\\
    \mbox{\textbf{LayerNorm:}}\quad 
    g_{A i } &= \gamma_ i  \frac{x_{A i } - \mathbb{E}[x_A]}{\sqrt{\mathrm{Var[x_A]}+\epsilon}} +\beta_ i  
    \label{eq:LayerNorm} 
\end{align}
where  $x_{A i }$ are the components with $A\in 1\dots N$ and $ i  \in 1\dots D$.

\begin{proof}[Proof of \cref{prop:dE}: Energy Descent]
    Using chain rule %
    \edits{        
    \begin{align}
        \dot{E}_A &=\sum_{B,C} \frac{\ro E_A}{\ro \vg_B}  {\ro \vg_B \over \ro {\vx_C}} \dot{\vx}_C\cr
        & =-\sum_{B,C} \frac{\ro E_A}{\ro \vg_B}  {\ro \vg_B \over \ro {\vx_C}} \bm\eta \frac{\ro E_C}{\ro \vg_C}
        \label{eq:dE}
    \end{align}
    }    
    Defining $\Gamma = \mathrm{diag}(\gamma)$, the Jacobian of $g$ becomes 
    \begin{align}
        {\partial \vg_{A i } \over \partial \vx_{B\rho}}  
        &= \Gamma_{ i  j } {\delta_{AB}\over r_A} \pa{\delta_{ j \rho} - y_{A j }y_{A\rho}},
        \label{eq:app:Jacobian}
    \end{align}
    where 
    \begin{align}
             \mbox{\textbf{RMSNorm:}\ } & & 
             r_A &= \norm{\vx_A}/\sqrt{D}, & y_A &= \vx_A/r_A\\
             \mbox{\textbf{LayerNorm:}\ } & & 
             r_A &= \sqrt{\mathrm{Var}[\vx_A]+\epsilon}, & y_A &= (\vx_A- \bm{\mu}_A)/r_A.
    \end{align}
    Since typically $\epsilon = 10^{-5} $ is a small constant, $\norm{y_A} \approx 1$ for LayerNorm, and exactly 1 for RMSNorm. 
    Therefore 
    \begin{align}
        {\partial \vg_A\over \partial \vx_A} & = {1\over r_A} \Gamma P_A, &
        P_A &= P_A^T, & 
        P_A^2 &= P_A + O(\epsilon)
    \end{align}
    where the approximate projection matrix $P_A$ is positive semi-definite, with $\hat{y}_A/ \norm{y_A}$ being its sole null eigenvector.     
    Plugging into \eqref{eq:dE} 
    \edits{
    \begin{align}
        \dot{E}_A &= - \sum_{B} {1\over r_B}\Tr{ {\partial E_A\over \partial g_B} \Gamma P_B  \bm\eta^T  {{\partial E_B\over \partial g_B}}^T
        }
        \label{eq:dE-Gamma-P_A-eta}
    \end{align}
    Note that when $B>A$, $\ro E_A/\ro \vg_B = 0$. 
    Hence, \eqref{eq:dE-Gamma-P_A-eta} can be separated into $A=B$ and $B<A$ 
    \begin{align}
        \dot{E}_A &= \sum_{B<A} \Tr{ {\partial E_A\over \partial \vg_B} {\ro \vg_B \over \ro {\vx_B}}  \dot{\vx}_B
        } - {1\over r_A}\Tr{ {\partial E_A\over \partial \vg_A} \Gamma P_A  \bm\eta^T  {{\partial E_A\over \partial \vg_A}}^T}
        \label{eq:dE-Gamma-P_A-eta-BA}
    \end{align}
    In order for token $A$ to converge as well, we only need to find conditions under which the second term in \eqref{eq:dE-Gamma-P_A-eta-BA} converges. 
    This is because the same conditions would then lead the second term in all $\dot{E}_B$ for $B<A$ to converge. 
    Then, by induction all $E_B$ will eventually converge. 
    Thus we want just the second term 
    \begin{align}
        \delta E_A &= - {1\over r_A}\Tr{ {\partial E_A\over \partial \vg_A} \Gamma P_A  \bm\eta^T  {{\partial E_A\over \partial \vg_A}}^T
        }
        \label{eq:deltaE-Gamma-P_A-eta}
    \end{align}
    to satisfy $\delta E_A<0$, 
    }
    which can be achieved if the symmetric part of $\Gamma P_A \bm\eta^T $ is p.s.d.. 
    Two simple solutions to this are 
    \begin{align}
        &&\bm\eta &= c\Gamma, & \mbox{or} & & 
        \bm\eta &= M(x) P_A\Gamma &&
    \end{align}
    where $M(x)$ is an arbitrary p.s.d. matrix and $c>0$. 
    If we want $\bm\eta$ to be simple weights instead of an $x$ dependent neural network, the solution is $\bm\eta = c \Gamma$. 
\end{proof}
	Note that \eqref{eq:deltaE-Gamma-P_A-eta} does not restrict  the anti-symmetric part of $\Gamma P_A \bm\eta$. 
Using $\bm\eta= c \Gamma + B$, the antisymmetric part satisfies $B^TP_A \Gamma = -\Gamma P_A B $. 
Since $P_A$ is rank $D-1$ for each $A$, the anti-symmetric part doesn't seem to have an $x$-independent solution.
\edits{
The fact that $P_A$ is different for each token severely restricts the form of $\bm\eta$ to get convergence. 
However, The boundedness of the energies $E_A$ due to boundedness of $\vg_A$ can tame the dynamics and lead to more choices for $\bm\eta$ yielding convergence. 
For example, a damped harmonic oscillator has a dynamics in which the state oscillates in damped fashion, but the energy is constantly decreasing. 

}

\subsection{Evolution of loss}
We can always absorb $\Gamma$ into the weights of AT and FF, so we will assume $\Gamma = I$ from here on. 
To find $\dot{E}_A$ we need to use chain rule with derivatives w.r.t. all tokens $B<A$.
\begin{align}
    \dot{E}_A &=\sum_B \ro_{g_B} E_A \ro_B g_B \dot{x}_B\cr 
    & =-\sum_B \ro_{g_B} E_A P_B \bm\eta \ro_{g_B} E_B  \cr 
    &=  \sum_{B<A} \ro_{g_B} E_A^T P_B \dot{x}_B - \ro_{g_A} E_A^T P_A \bm\eta \ro_{g_A} E_A 
\end{align}
because of the coupling between different tokens, it is not clear whether $\dot{E}_A$ is negative or not. 
Yet, due to the causality of the $B<A$ interaction, token $B$ is not affected by any future token. It follows that if token $B$ dynamics converges, meaning $\dot{x}_B \to 0$ for all $B<A$, we have 
\begin{align}
    \mbox{if\ }\dot{x}_B &= 0, \quad \forall B<A: \quad 
    \dot{E}_A = \pa{{\ro E_A\over \ro g_A}}^T P_A\bm\eta  {\ro E_A\over \ro g_A}
    = -\dot{x}_A^T \bm\eta^{-1} P_A \dot{x}_A 
\end{align}
where the last part assumes $\bm\eta$ is invertible. 
Hence, token $A$ will converge if $P_A \bm\eta $ is p.s.d. 
Since $P_A$ is a projection away from $x_A$ and $P_A \bm\eta$ should be p.s.d. for all $A$, it follows that on the data manifold $\bm\eta$ can only be a constant times identity. 
If the the data is on a low-rank subspace, $\bm\eta$ can be arbitrary on the orthogonal subspace. 
More formally, let $P_\|$ represent projection onto $S_\| = \mathrm{Span}\{x_A| \forall A\}$, meaning $x_A^T P_\| x_A = \norm{x_A}^2 $. 
Let $P_\perp$ be the projection to the subspace orthogonal to all data $x_A$, meaning $x_A^T P_\perp x_A =0$, and so $\|P_\perp P_\|\| = 0$. 
The $\bm\eta_\| = P_\| \bm\eta P_\| = c P_\|$ for some $c>0$, while $\bm\eta_\perp = P_\perp \bm\eta P_\perp$ is an arbitrary p.s.d. matrix with null space including $S_\|$. 

By induction, we can show that the $\bm\eta$ above leads to convergence of all tokens. 
For the first token $A=1$, there are no past tokens to convergence requires 
\begin{align}
    \dot{E}_1 & = - \pa{{\ro E_1\over \ro g_1}}^T P_1\bm\eta  {\ro E_1\over \ro g_1} < 0 \cr 
\end{align}
For the second token, when the first token converges, $\dot{x}_1 = 0$, resulting in the same condition for $\dot{E}_2$. 
Assume $\bm\eta = I$ and introduce the shorthand $E_{A;B} = \ro E_A/\ro g_B$,
and the Mahalonobis norm $\|z\|_M^2 = z^T M z$. 
For any $A$ 
\begin{align}
    E_{A;A}P_A \dot{x}_A &= -\| E_{A;A}\|_{P_A}^2 
    = -\| \dot x_{A}\|_{P_A}^2\cr 
    \dot{E}_A &= E_{A;A}P_A \dot{x}_A + \sum_{B<A}E_{A;B}P_B \dot{x}_B \cr 
    &= -\| \dot x_{A}\|_{P_A}^2 + \sum_{B<A}E_{A;B}P_B \dot{x}_B \\
    &\leq  \sum_{B<A} E_{A;B}P_B \dot{x}_B 
\end{align}

Since $E_A$ are bounded from below (shown next), $\dot{E}_A<0$ should eventually lead to convergence.
For $E_1$, there are no past tokens and $\dot{E}_1<0$ always, resulting in convergence. 

\edits{
\subsection{Boundedness of the energy.}
First, observe that $E_A = E^{AT}_A+E^{FF}_A$ is bounded from below, given some assumptions about FF. 
For AT, since $g_B = \delta x_A/\|\delta x_A\| +\beta = \hat{y} + \beta $, we have 
\begin{align}
    \|g_A\|^2 &= 1+ 2\beta \cdot \hat{y}_A  & %
    \Rightarrow  1 - 2\|\beta \| \leq \|g_A\|^2  &\leq 1 + 2\|\beta \| \cr
    g_A^T \mJ g_B &= \hat{y}_A^T \mJ\hat{y}_B  + \beta \cdot (\mJ\hat{y}_B + \mJ^T\hat{y}_A) \cr %
    &-(1+2\|\beta\|)\|\mJ\|_2\leq  g_A^T \mJ g_B & \leq (1+2\|\beta\|)\|\mJ\|_2
\end{align}
where $\|\mJ\|_2$ is the spectral norm of $\mJ$  equal to the largest singular value of $\mJ$. 
Therefore, using monotonicity of log and exp
\begin{align}
    E^{AT}_A &= \log \sum_{B<A} \exp[g_A^T \mJ g_B] \leq \log (A \max_{B<A}\{\exp[g_A^T \mJ g_B]\})\cr 
    &\leq \log A + \max_{B<A}\{g_A^T \mJ g_B\} \leq \log A + (1+2\|\beta\|)\|\mJ\|_2  
\end{align}
and similarly for the lower bound resulting in 
\begin{align}
    -(1+2\|\beta\|)\|\mJ\|_2 \leq |E^{AT}_A - \log A| \leq (1+2\|\beta\|)\|\mJ\|_2 
\end{align}
For FF, we need to assume a form for the energy. 
First, considering $E^{FF}(g_A) = -\|\sigma(W g_A)\|^2$ with activation $\sigma$ being Lipschitz, as in ReLU or GeLU. 
The Lipschitz condition means 
\begin{align}
    \| \sigma(x) - \sigma(y) \| \leq L\|x-y\|
\end{align}
for some Lipschitz constant $L>0$. 
setting $y=0$ and $x = W g_A$ we get 
\begin{align}
    \| \sigma(Wg_A) - \sigma(0) \| \leq L\|W g_A\| \leq L \|W\|_2 \|g_A\| 
    \leq L \|W\|_2 (1+\|\beta\|)
\end{align}
using triangle inequality $\|a+b\| \leq \|a\| + \|b\|$, with $a=  \sigma(Wg_A) - \sigma(0) $ and $ b= \sigma(0)$ we get 
\begin{align}
    \|\sigma(Wg_A)\| &\leq \|\sigma(Wg_A) - \sigma(0)\| + \|\sigma(0)\| \leq L \|W g_A\| + \|\sigma(0)\| 
    \cr &\leq L \|W\|_2 (1+\|\beta\|) + \|\sigma(0)\|
\end{align}
and similarly, using the other side of triangle inequality $\|a\| \geq \|a+b\| - \|b\| $
with $ a = \sigma(Wg_A)$ and $ b = -\sigma(Wg_A)+ \sigma(0)$ we have 
\begin{align}
     \|\sigma(Wg_A)\| \geq   \|\sigma(0)\|-\|\sigma(Wg_A)-\sigma(0)\| \geq  \|\sigma(0)\| - L \|W\|_2 (1+\|\beta\|)
\end{align}
therefore
\begin{align}
    - (\|\sigma(0)\| + L \|W\|_2(1+\|\beta\|))^2\leq  E^{FF}(g_A) \leq -(\|\sigma(0)\| - L \|W\|_2(1+\|\beta\|))^2
\end{align}
For more general FF, as long as they are MLP with Lipschitz activations, the normalized nature of $g_A$ should guarantee boundedness of $E^{FF}$ and, consequently, $E$.

} %
\section{Experiments \label[appsec]{ap:experiments}}

\edits{
\subsection{Architecture Details} \label[appsec]{ap:exp:arch-details}

NRGPT serves as a drop-in replacement for a standard GPT block, where tokens and positions are embedded using learnable embedding matrices following the precedent of GPT-2.\footnote{Specifically, GPT-2's implementation by the \href{https://github.com/karpathy/nanoGPT}{nanoGPT repository}} Text is tokenized using the default OpenAI BPE tokenizer from GPT-2 (with just over 50k vocab tokens). For prediction, tokens are ``unembedded'' using a linear projection layer whose weights are shared by the input's embedding matrix. 

We describe the predictions of NRGPT as a \emph{sampling process} obtained by minimizing the energy of each token, which serves as a recurrent replacement of the \emph{complete forward pass} through the transformer. This is easiest to explain for when $\eta = 1$ where the forward pass is explicit gradient descent on the energy $E_A$ of token index $A$ (these energy trajectories are shown in \cref{fig:long_trajectory}). Gradient descent (GD) provides a fast, differentiable way to move initial points to locations which have lower energy (higher likelihood). 
Each new token starts from an initial position $\mathbf{x}_A(t=0)$, and the forward pass does GD, evolving it to a point $x_A(t=t_f)$ which has lower energy. 
In this case, $E_A$ can be interpreted as the log-transition probability. 
In all GPT-style models and therefore also in NRGPT, we choose the initial point $\mathbf{x}_A(t=0)$ to be $\mathbf{x}_{A-1}$, the embedding of the previous token, but tokens could be initialized to anything in theory.

In the case where $\boldsymbol{\eta} \ne \mathbf{1}$, the forward pass follows more complex dynamics than pure GD, but in general the model learns dynamics that convert $\mathbf{x}_{A-1}$ to $\mathbf{x}_A$. We refer to this process as ``sampling'' throughout the paper; however, we note that this is not a strict sampling process for any value of $\boldsymbol{\eta}$.
}

\subsection{Evaluation Metrics \label[appsec]{ap:exp:metrics}}

To assess the generation quality of our language models, we utilize several complementary metrics. We use Perplexity as a measure of model uncertainty, computed using a pretrained GPT-2 model to evaluate how well the generated text aligns with expected language patterns. Lower perplexity indicates more fluent and predictable text, with scores typically ranging from 10 (excellent) to 1000+ (poor quality).
For lexical diversity, we utilize Distinct-1 and Distinct-2, which measure the ratio of unique unigrams and bigrams to total n-grams (or total words) in the generated text, respectively. These metrics range from 0 to 1, where higher values indicate greater vocabulary diversity and less repetitive generation. A Distinct-1 score near 0 suggests highly repetitive text, while scores above 0.8 indicate rich vocabulary usage.
We utilize Average Pairwise Cosine Similarity using Sentence-BERT embeddings ~\citep{reimers2019sentence} to measure semantic diversity within generated samples. This metric calculates the mean cosine similarity between all pairs of generated sentences, ranging from -1 to 1. Optimal values fall between 0.3 and 0.6, balancing semantic diversity with topical coherence. Values below 0.3 indicate excessive divergence with potentially incoherent topic-jumping between sentences, while values above 0.7 suggest repetitive or redundant content with insufficient variation. The target range of 0.3-0.6 represents healthy diversity where generated sentences explore different aspects of a topic while maintaining semantic relevance and coherent narrative flow.

Finally, We compute the Grammar Quality Score (GQS), a composite metric that combines rule-based grammar error detection, spelling accuracy via spellchecker~\citep{pyspellchecker}, and readability assessment using Flesch-Kincaid grade level~\citep{flesch1948new}. GQS ranges from 0 (poor grammar) to 1 (perfect grammar), weighting grammatical correctness (50\%), spelling accuracy (30\%), and readability (20\%). The metric identifies errors across ten categories including subject-verb agreement, tense consistency, and punctuation, with severity-weighted scoring. For complete context and to understand what the ideal ranges are for all of these metrics, see \cref{tab:generation_metrics}.

\begin{table}[h]
\centering
\caption{Ideal ranges for generation quality metrics}
\begin{tabular}{lcc}
\toprule
\textbf{Metric} & \textbf{Good Range} & \textbf{Interpretation} \\
\midrule
Perplexity & 15-50 & Lower is better (fluency) \\
Distinct-1 & 0.6--0.9 & Higher is better (vocabulary diversity) \\
Distinct-2 & 0.8--0.95 & Higher is better (bigram diversity) \\
GQS & 0.8--1.0 & Higher is better (grammatical quality) \\
Avg. Cosine Similarity & 0.3--0.6 & Moderate values best (semantic diversity) \\
\bottomrule
\end{tabular}
\label{tab:generation_metrics}
\end{table}

\subsection{Listops \label[appsec]{ap:exp:listops}}
We perform experiment on nested math operations on lists of integers, which are a version of ListOps \citep{nangia2018listops}. 
Our ListOps setting consists of three functions: maximum, median and sum modulo 20. 
Our inputs range from 0 to 19. 

\subsection{Shakespeare} \label[appsec]{ap:exp:shakespeare}

As training data we used the full Shakespeare training set, tokenized such that each character constitutes a single token. Models were evaluated across the same held out validation subset. Across all models, we used a context window of 256 tokens, dropout of 10\%, the AdamW($\beta_1$=0.9, $\beta_2$=0.99), and 40k maximum update iterations using a minibatch size of 64. We varied model sizes by sweeping over embedding dimensions $(32,64,128,256,380,512,768)$ and the number of attention heads $(1,2,8)$. For the recurrent models, we additionally varied the number of layers across $(3,6,8)$, though this does not affect the parameter count.

We found the recurrent models to be quite sensitive to choices of learning rate and learning rate schedules. Hence, we explored several different maximum learning rates $(1e-3, 7.5e-4, 3e-4, 1e-4)$, schedules (cosine, exponential), and minimum learning rates ($10\times$, $20\times$, and $100\times$ smaller than the max learning rate). LR warm-up was 100 updates for all experiments. 

In \Cref{fig:shakespeare-loss} we emphasize the best losses we were able to achieve for each model size. In addition, we capture the model's sensitivity to hyperparams by showing the top 50\% performing models across all hyperparameters.

\subsubsection{Best Model Configurations and Metrics \label[appsec]{ap:exp:shakespeare-bm}}
\cref{tab:shakespeare:best-ap} shows the best model configuration for baseline GPTs and our model NRGPT with the respective generation quality metrics. We see that NRGPT outperforms the baseline GPT, RGPT and RGPT-parallel in terms of generation quality while it has only half of the nparams of GPT. 
\begin{table}[!ht]
\caption{Best model configurations and quality metrics for Shakespeare. Note abbreviations: RGPT-parallel $\to$ RGPT-P, no of parameters $\to$ n\_param, grammar quality score $\to$ gqs, average pariwise cosine similarity $\to$ apcs, distinct-1 $\to$ d-1 and distinct-2 $\to$ d-2.}
\vspace{0.12in}
\label{tab:shakespeare:best-ap}
\centering
\resizebox{\textwidth}{!}
{\footnotesize\begin{tabular}{l|cccccc|ccccc}
\toprule
\multirow{2}{*}{\bf Model}  & \multicolumn{6}{c}{\bf Configuration}  & \multicolumn{5}{c}{\bf Metrics} \\
\cmidrule{2-12}
& lr & min\_lr & n\_layer & n\_head & n\_embed & n\_params & perplexity & gqs & apcs & d-1 & d-2  \\
\midrule
GPT & 5e-3 & 5e-4 & 8 & 4 & 64 & 0.4M & 294 & 0.913 & 0.198 & 0.751  & 0.978 \\
RGPT & 1e-3 & 1e-4 & 8 & 1 & 256 & 0.8M & 476 & 0.896 & 0.164 & 0.794  & 0.995 \\
RGPT-P & 2e-3 & 2e-4 & 8 & 1 & 128 & \textbf{0.2M} & 410 & 0.888 & 0.190 & 0.838  & \textbf{1} \\
NRGPT\_H\_FF1 & 3e-4 & 3e-5 & 6 & 2 & 512 & 2M & 318 & 0.901 & \textbf{0.218} & \textbf{0.797} & 0.995 \\
NRGPT\_H\_FF2W & 1e-3 & 1e-4 & 8 & 1 & 128 & \textbf{0.2M} & \textbf{283} & \textbf{0.975} & 0.176 & 0.765  & 0.995 \\

\bottomrule
\end{tabular}}
\end{table}

\subsection{OpenWebText \label[appsec]{ap:exp:openwebtext}}
We perform experiments on natural language modeling using the OpenWebText corpus, which is an open-source recreation of GPT-2's WebText dataset~\citep{radford2019language}.
The dataset contains approximately 17GB of text with 9B tokens that came from 8 million documents.
We tokenize using byte-pair encoding (BPE) with a vocabulary size of 50,257 tokens.
Our training sequences are fixed-length contexts of 1024 tokens. \cref{tab:hp_owt} lists the hyperparameters and respective values/ranges used in our experiments. 

\cref{tab:owt:best-ap} shows the best model configuration for baseline GPT and RGPT-parallel and our model NRGPT with the respective generation quality metrics. We see that the generation quality of NRGPT is very competitive with GPT and RGPT-parallel while it contains around 34M less parameters than GPT. \cref{fig:owt-generation} shows an example of generated text by GPT, RGPT-parallel and NRGPT.

\edits{
We report the best OWT training configurations for the metrics reported in the main paper in \cref{tab:owt:best}. We additionally test models at a more comparable 125M parameter scale in \cref{tab:owt:best-similar-params}.

\subsubsection{MMLU}

Perplexity and simple generation metrics are insufficient to understand the performance of language models on downstream tasks. Thus, we additionally characterize the equal-parameter models of \cref{tab:owt:best-similar-params} on the MMLU dataset~\citep{hendrycks2020measuring} using five-shot generation in \cref{tab:owt:mmlu}. Remarkably, we find that \emph{NRGPT outperforms both GPT and RecGPT at constant parameter count} on the average MMLU score. 

As a caveat, we note that transformers at the ${\sim}125$M scale generally perform poorly on these downstream tasks. For example, the original MMLU paper~\citep{hendrycks2020measuring} shows that GPT-3-small (2.7 billion params) and GPT-3-medium (6.7 billion params)~\citep{brown2020language} achieve a 25\% success rate --- the exact same as random guessing. A large GPT-2 model (1.5 billion params)~\citep{radford2018improving} achieves a slightly better 32.4\% accuracy, but this is still only 7\% higher than random prediction. Our NRGPT model at 125M parameters (an order of magnitude smaller than the MMLU results of GPT-2) achieves a 29.3\% accuracy, which is the best performance per parameter count on this task that we are aware of.

We believe that our results in \cref{tab:owt:mmlu} should be taken with a grain of salt, since we are testing small models and comparing between low accuracies. However, we also believe that these hint at NRGPT's advantage at greater scales.
}

\begin{table}%
    \edits{
\caption{Best Model Configurations and Quality Metrics (average of four generations) for OWT with similar no of parameters. Note abbreviations: no of parameters $\to$ n\_param, best train loss $\to$ t\_loss, best validation loss $\to$ v\_loss, grammar quality score $\to$ gqs, average pariwise cosine similarity $\to$ apcs, distinct-1 $\to$ d-1 and distinct-2 $\to$ d-2.}
\vspace{0.12in}
\label{tab:owt:best-similar-params}
\centering
\resizebox{\textwidth}{!}
{\footnotesize
\begin{tabular}{l|cccccccc|ccccc}
\toprule
\multirow{2}{*}{\bf Model}  & \multicolumn{8}{c}{\bf Configuration}  & \multicolumn{5}{c}{\bf Metrics} \\
\cmidrule{2-14}
& lr & min\_lr & n\_layer & n\_head & n\_embed & n\_params & t\_loss & v\_loss & perplexity & gqs & apcs & d-1 & d-2  \\
\midrule
GPT & 7e-4 & 7e-5 & 12 & 12 & 768 & 124M & 2.90 & 2.93 & 78 & 0.947 & 0.274 & 0.638 & \textbf{0.963} \\
GPT\_Rec\_Parallel & 6e-4 & 6e-5 & 6 & 12 & 1740 & 126M & 3.07 & 3.08 & 74 & \textbf{0.950} & 0.275 & 0.613  & 0.962 \\
NRGPT\_H\_FF2W  & 3e-5 & 3e-6 & 6 & 12 & 1536 & 128M & 3.29 & 3.30 & 105 & 0.946 & 0.306 & \textbf{0.650}  & 0.960 \\

\bottomrule
\end{tabular}
}}
\end{table} 
\begin{table}[!ht]
    \edits{
\caption{MMLU performance comparison using five-shot generation. The models tested in this work are of insufficient size and scale to achieve good performance on all of MMLU, so we report results by subject. Best are bold.}
\vspace{0.12in}
\label{tab:owt:mmlu}
\centering
\resizebox{\textwidth}{!}
{\footnotesize
\begin{tabular}{l|ccccccc|c}
\toprule
{\bf Model} & {\bf Astronomy} & {\bf Science} & {\bf CS} & {\bf Economics} & {\bf Medicine} & {\bf Math} & {\bf Global Facts} & {\bf Average} \\
\midrule
NRGPT (128M) & 35.5 & \textbf{32.1} & \textbf{31.5} & 30.0 & 30.5 & \textbf{29.2} & \textbf{19.0} & \textbf{29.3} \\
RGPT (126M)  & 28.3 & 27.1 & 28.0 & \textbf{32.1} & 25.2 & 24.1 & 17.0 & 26.0\\
GPT (124M)   & \textbf{36.8} & 29.5 & 26.2 & 26.7 & \textbf{31.2} & 27.3 & 18.0 & 28.0\\
\bottomrule
\end{tabular}
}}
\end{table}

\begin{table}[!ht]
\caption{OWT Hyperparameters and range of values.}
\label{tab:hp_owt}
\begin{center}
\small
\begin{tabular}{l l}
\toprule
\multicolumn{1}{c}{\bf Hyperparameter} &\multicolumn{1}{c}{\bf Used Values} \\
\midrule
batch\_size & 12 \\
block\_size & 1024 \\
n\_layer & [3, 6, 9, 12, 24] \\
n\_head & [1, 2, 4, 6, 12, 16] \\
n\_embed & [768, 1020, 1536] \\
learning\_rate, lr & [1e-3 - 1e-5] \\ 
min\_lr & [lr/10 - lr]  \\
beta1 & 0.9 \\
beta2 & 0.99 \\
weight\_decay & [1e-1, 1e-2] \\
gradient\_accumulation\_steps & 40 \\
eval\_interval & 1,000 \\
eval\_iters & 200 \\
max\_iters & 100000 \\
warmup\_iters & [100, 2000]  \\
dropout & 0.0 \\

\bottomrule
\end{tabular}
\end{center}
\end{table}

\begin{table}[!ht]
\caption{Best Model Configurations and Quality Metrics for OWT. Note abbreviations: RGPT-parallel $\to$ RGPT-P, no of parameters $\to$ n\_param, grammar quality score $\to$ gqs, average pariwise cosine similarity $\to$ apcs, distinct-1 $\to$ d-1 and distinct-2 $\to$ d-2.}
\vspace{0.12in}
\label{tab:owt:best-ap}
\centering
\resizebox{\textwidth}{!}
{\footnotesize\begin{tabular}{l|cccccc|ccccc}
\toprule
\multirow{2}{*}{\bf Model}  & \multicolumn{6}{c}{\bf Configuration}  & \multicolumn{5}{c}{\bf Metrics} \\
\cmidrule{2-12}
& lr & min\_lr & n\_layer & n\_head & n\_embed & n\_params & perplexity & gqs & apcs & d-1 & d-2  \\
\midrule
GPT & 7e-4 & 7e-5 & 12 & 12 & 768 & 124M & \textbf{75} & \textbf{0.978} & 0.306 & 0.619  & 0.965 \\
RGPT-P & 6e-4 & 4e-4 & 12 & 12 & 768 & \textbf{85M} & 99 & 0.976 & \textbf{0.336} & 0.615  & 0.975 \\
NRGPT & 1e-4 & 7e-5 & 12 & 12 & 768 & 90M & 104 & 0.966 & 0.306 & \textbf{0.674}  & \textbf{0.984} \\

\bottomrule
\end{tabular}}
\end{table}

\begin{figure}[t]
    \centering
    \includegraphics[width=0.32\columnwidth]{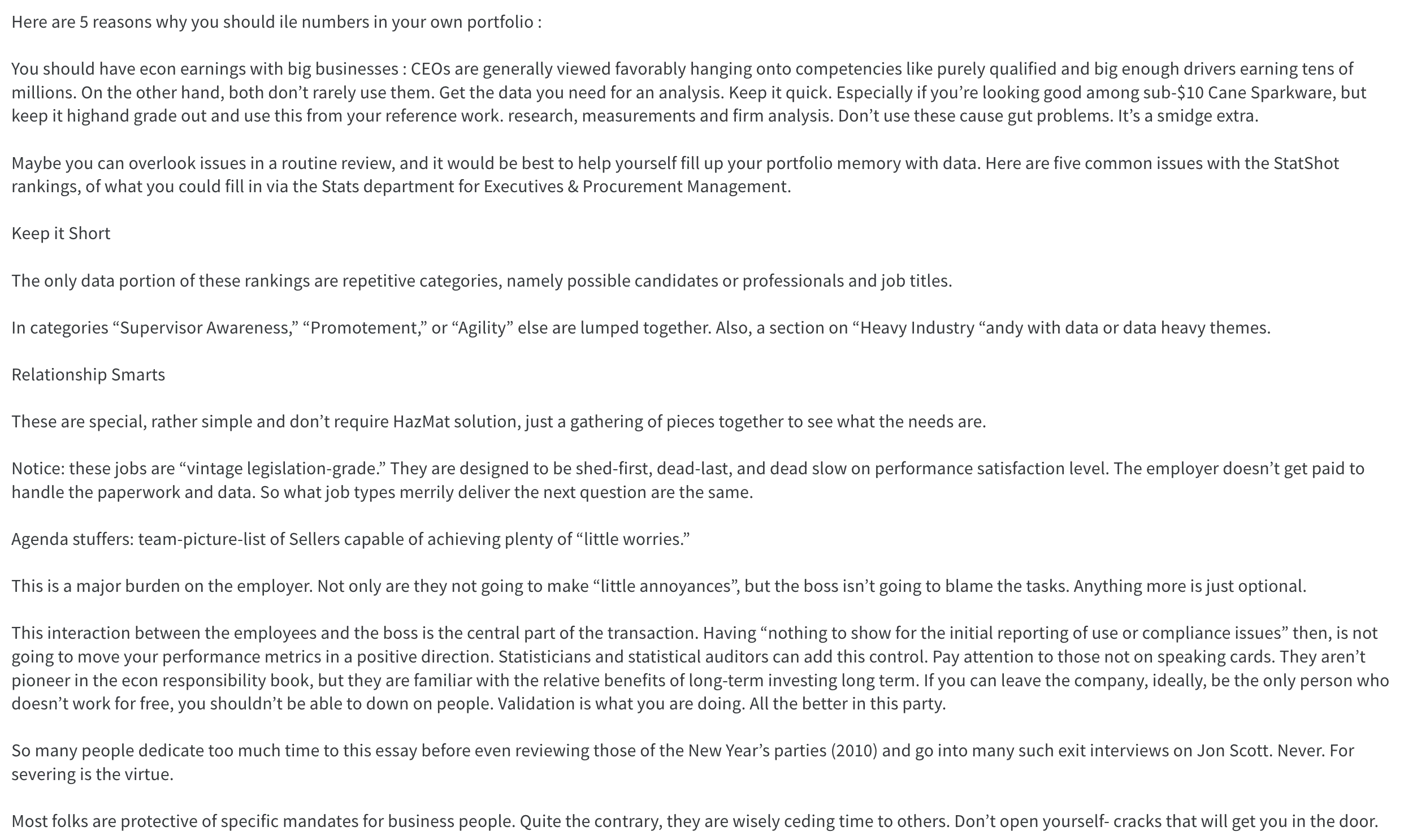}
    \includegraphics[width=0.32\columnwidth]{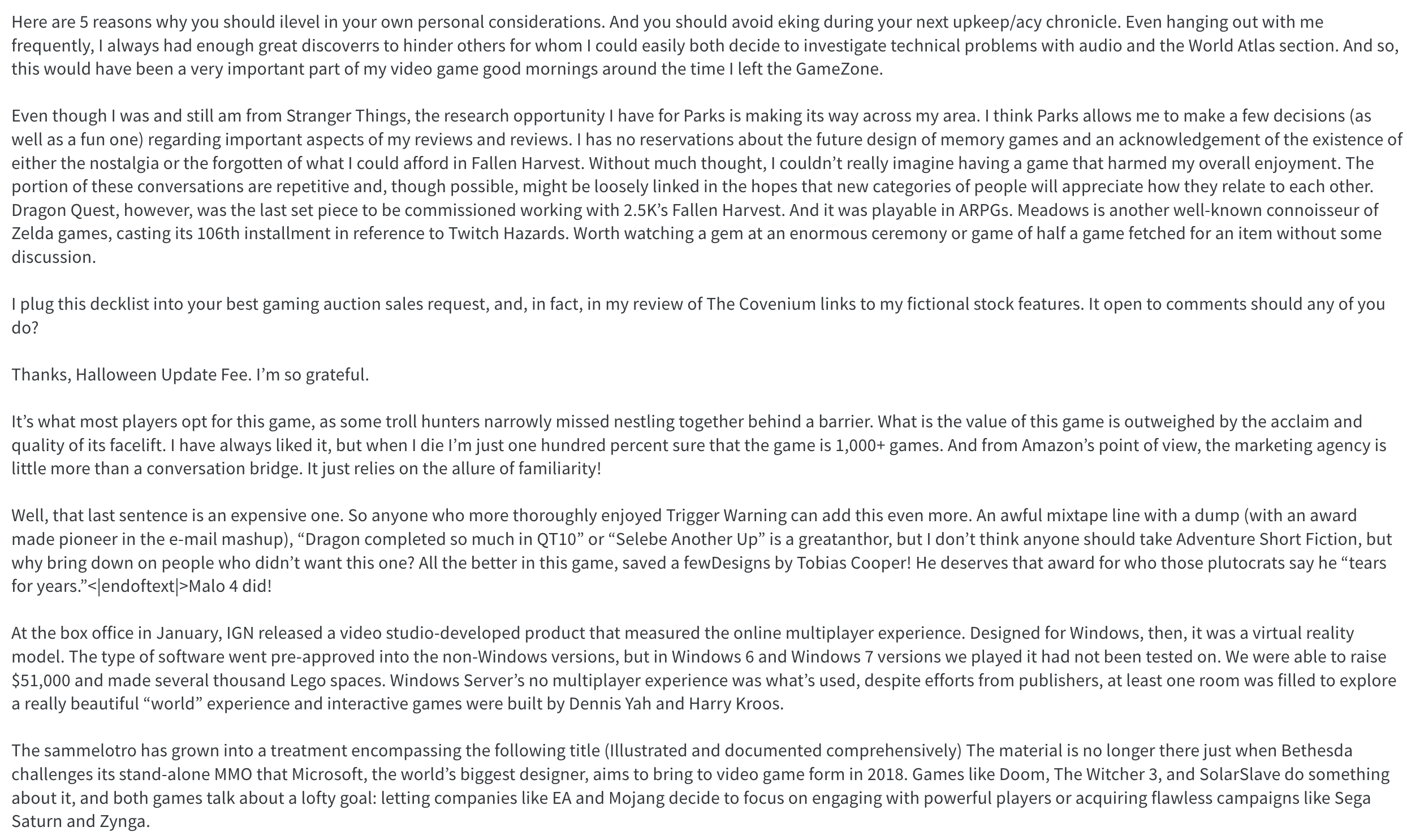}
    \includegraphics[width=0.32\linewidth]{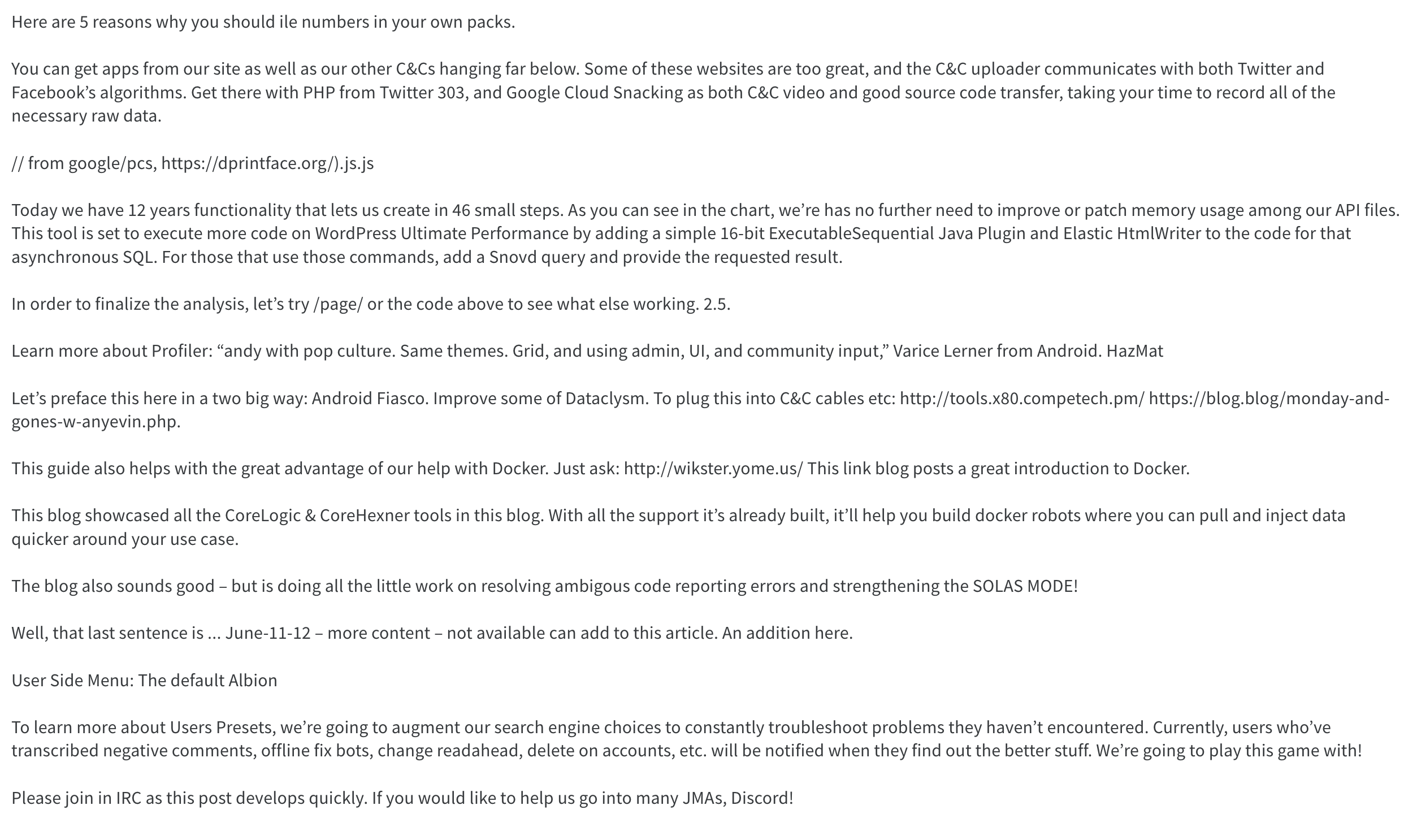}
    \caption{Best Generation Examples from GPT (left column), RGPT-parallel (middle column) and NRGPT (right column).}
    \label{fig:owt-generation}
\end{figure}

\edits{
\subsection{Unconstrained projection matrix}

We wonder how much it is necesary to constrain the inference rate matrix $\boldsymbol{\eta}$ such that NRGPT continues to exhibit convergence, as shown in \cref{fig:long_traj_unconstrained}. To test this, we repeat the experimental setting of \cref{fig:long_trajectory}, training a 100-layer NRGPT model on ListOps that achieves 100\% total accuracy. We then take 64 tokens from a validation set and minimize their energy. Remarkably, despite using a $\boldsymbol{\eta}$ that is completely unconstrained (and therefore with no guarantee of energy minimization), we observe that the energy of tokens eventually stop fluctuating and converge to a stable state. 

Though this observation is empirical and is not guaranteed to be the case should we have trained a model for a much longer time, it is still an interesting observation that suggests that NRGPT's convergent properties may be encouraged by the objective function, freeing us from needing to constrain the inference rate matrix $\boldsymbol{\eta}$.
}

\begin{figure}[t]
    \centering
    \includegraphics[width=0.5\columnwidth]{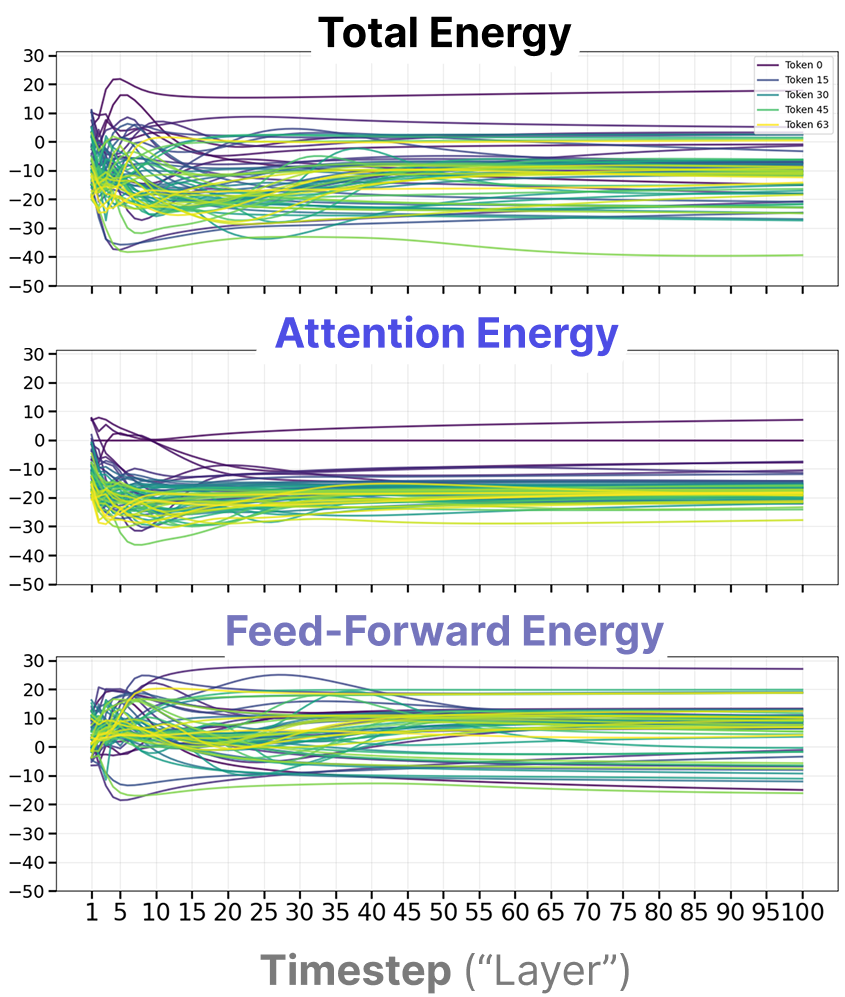}
    \caption{\edits{\textbf{NRGPT's inference rate matrix $\boldsymbol{\eta}$ does not need to be constrained to induce convergent dynamics}. Shown is a 100-layer NRGPT model that achieves 100\% accuracy on ListOps. Shown are 64 tokens from a validation set passed to NRGPT, where token dynamics are shown to stabilize and converge without any constraint on $\boldsymbol{\eta}$.}}
    \label{fig:long_traj_unconstrained}
\end{figure}

\end{document}